\documentclass[conference]{IEEEtran}
\usepackage{times} 

\usepackage{amsmath, amssymb, amsfonts, amsthm, bm, bbm, mathtools}

\usepackage[numbers, sort&compress]{natbib} 
\usepackage{graphicx}
\usepackage{subcaption}
\usepackage{multicol}
\usepackage{hhline}
\usepackage{booktabs}
\usepackage{subcaption}
\usepackage[bookmarks=true]{hyperref}
\usepackage{cleveref}
\usepackage{algorithm}
\usepackage{algorithmic}

\newtheorem*{VI}{Ville's Inequality \cite{Ville_1939}}
\newtheorem{thm}{Theorem}

\newtheorem{lem}{Lemma}
\usepackage{xcolor}

\theoremstyle{definition}
\newtheorem{defn}{Definition}
\newtheorem{rem}{Remark}

\newcommand{\OurMethod}{N-SCORE}

\begin{document}

\title{Beyond Binary Success: Sample-Efficient and Statistically Rigorous Robot Policy Comparison}

\author{\authorblockN{David Snyder$^{1}$, Apurva Badithela$^{3}$, Nikolai Matni$^{1}$, \\ George Pappas$^{1}$, Anirudha Majumdar$^{3}$, Masha Itkina$^{2\ddagger}$, and Haruki Nishimura$^{2\ddagger}$}
\authorblockA{$^{1}$University of Pennsylvania, $^{2}$Toyota Research Institute (TRI), $^{3}$Princeton University. \\ $^\ddagger$Equal Advising. Corresponding Author: \texttt{dsnyder5@seas.upenn.edu}}
}

\maketitle

\begin{abstract}
Generalist robot manipulation policies are becoming increasingly capable, but are limited in evaluation to a small number of hardware rollouts. This strong resource constraint in real-world testing necessitates both more informative performance measures and reliable and efficient evaluation procedures to properly assess model capabilities and benchmark progress in the field. This work presents a novel framework for robot policy comparison that is sample-efficient, statistically rigorous, and applicable to a broad set of evaluation metrics used in practice. Based on safe, anytime-valid inference (SAVI), our test procedure is \emph{sequential}, allowing the evaluator to \emph{stop early} when sufficient statistical evidence has accumulated to reach a decision at a pre-specified level of confidence. Unlike previous work developed for binary success, our unified approach addresses a wide range of informative metrics: from discrete partial credit task progress to continuous measures of episodic reward or trajectory smoothness, spanning both parametric and nonparametric comparison problems. Through extensive validation on simulated and real-world evaluation data, we demonstrate up to 70\% reduction in evaluation burden compared to standard batch methods and up to 50\% reduction compared to state-of-the-art sequential procedures designed for binary outcomes, with no loss of statistical rigor. Notably, our empirical results show that competing policies can be separated more quickly when using fine-grained task progress than binary success metrics.
\end{abstract}
\IEEEpeerreviewmaketitle
\section{Introduction}
\label{the_introduction}
Recent advances in robot policy synthesis incorporate increasing complexity throughout the design process, training on large-scale datasets, using sophisticated, stochastic architectures, and requiring commensurate increases in training resources. These advances have led to significant improvements in solving dexterous and long-horizon tasks~\cite{barreiros2025careful}, inferring semantic information from context~\cite{intelligence2025pi}, and safely interacting with numerous other autonomous agents~\cite{kusano_comparison_2025}. However, this complexity makes design decisions like the choice of dataset, the training or fine-tuning procedure, and the network architecture analytically opaque. The value of a novel intervention cannot be determined from first principles but instead requires rigorous analysis~\cite{agarwal2021deep, kress2024robot} of its effect on empirical performance. 

Unfortunately, rigorous analysis of empirical performance is a significant challenge in the robotics setting. Hardware evaluation is the gold-standard measure, but is expensive and slow to collect. Evaluations in simulation reduce the time burden, but continue to suffer from sim-to-real gaps, making them an imperfect proxy~\cite{aljalbout_reality_2025}. Further, evaluation metrics can be quite coarse. Binary measurement of task success or failure, the \emph{de facto} standard for measuring the performance of robot manipulation policies (particularly on hardware), can obscure valuable information about the robot behavior~\cite{kress2024robot, beck_sparc_2018}. For instance, a policy that completes $90\%$ of the task is clearly better than a policy that is frozen the whole time, yet their success rates would be identically $0\%$. Finally, rigorous evaluation must account for inherent uncertainty, including in environment configuration and policy action selection (which is often stochastic \cite{chi_diffusion_2025}). This uncertainty means that observed empirical performance is a noisy estimate of, and \emph{not equivalent to}, the true expected performance of the policy. 

Since the value of design interventions is implicitly counterfactual (see \Cref{fig:anchor}), policy comparison is a particularly important form of evaluation to reliably determine the effects of design changes. For example: `does a change in policy architecture \cite{kim2024openvla, octo_2023} or action tokenization \cite{pertsch_fast_2025} improve downstream performance?' Or: `what set of expert data is the most valuable \cite{zha_guiding_2025} to improve policy performance?' Importantly, comparison must account for uncertainty in evaluating both the new design and the baseline. Making rigorous, statistically assured decisions for these problems is necessary to ensure reliable progress within the field. 

The costs of robot evaluation and the complexity of the robot policy design space motivate the development of a versatile framework for policy comparison. Ideally, such a framework would apply to very general performance measures, maintain statistical rigor, and ensure maximal sample efficiency via sequentialized evaluation. Current methods are deficient in at least one of these respects. Active learning \cite{anwar2025efficient} and asymptotic \cite{welch_generalization_1947, chernoff_sequential_1961} approaches are not statistically rigorous, particularly in small-data regimes. Batch and fully nonparametric methods \cite{Fisher_1922, barnard_significance_1947, welch_generalization_1947, waudby2024estimating} are not maximally sample efficient: the former due to the batch evaluation, and the latter due to not tailoring to the comparison setting. Near-optimal approaches \cite{snyder2025your, lai_nearly_1994, turner_exact_2023} are constrained to narrow metrics (e.g., binary success rate), and do not readily generalize. 

\begin{figure*}[th!]
    \centering
    \includegraphics[width=\linewidth]{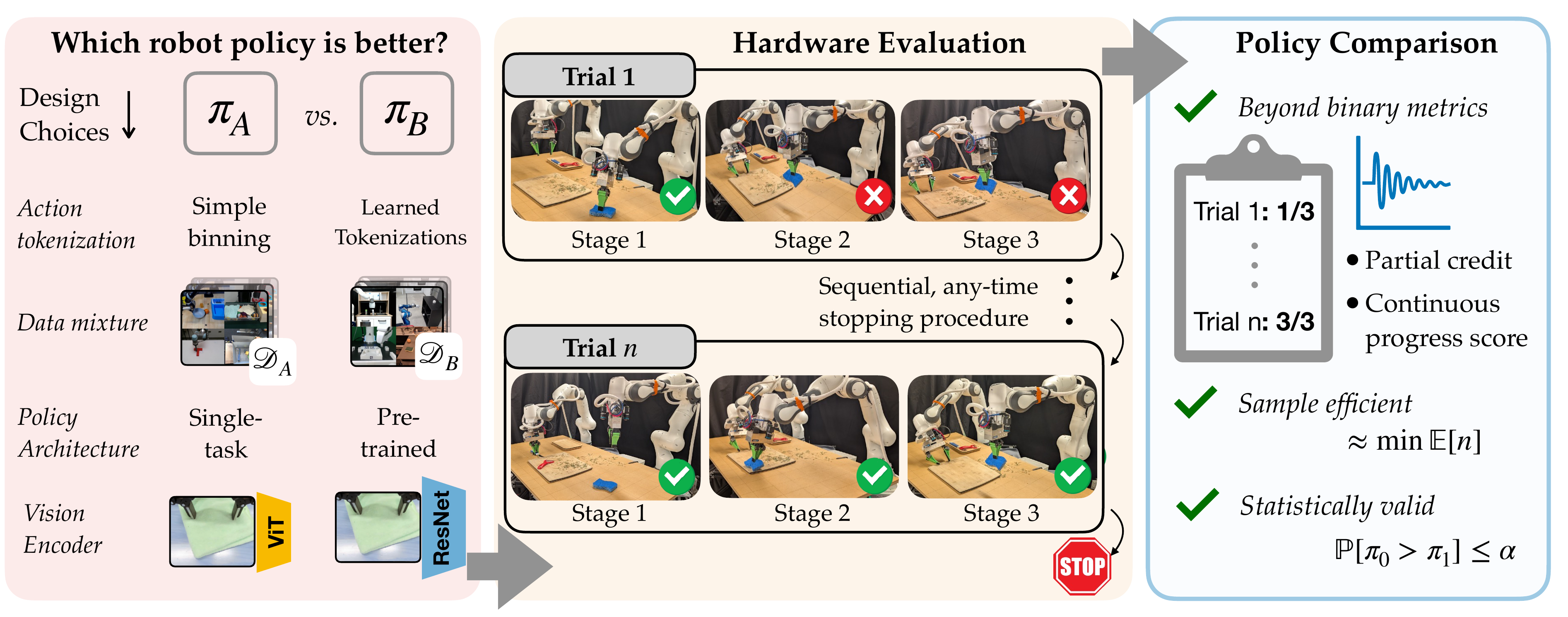}
    \caption{The evaluation context of the~\OurMethod~procedure. (Left) We consider the general problem of policy comparison, which arises out of counterfactual design decisions in the policy synthesis process. (Middle) Evaluation on hardware or in high-fidelity simulation is the gold standard to assess the effect of such changes, but is costly to collect. (Right)~\OurMethod~is a sequential evaluation procedure that is statistically rigorous, sample efficient, and generalizes to rich, diverse measures of robot performance.}
    \label{fig:anchor}
    \vspace{-1.5em}
\end{figure*}

This work presents \textbf{N}onparametric \textbf{S}equential \textbf{CO}mparison for \textbf{R}igorous \textbf{E}valuation (N-SCORE) to compare robot policy performance.~\OurMethod~is designed to address the shortcomings of prior approaches by explicitly accounting for all three of the preceding desiderata -- rigor, sample efficiency, and generality -- within the decision rule synthesis. We summarize our \textbf{contributions} as follows:  
\begin{enumerate}
    \item We introduce~\OurMethod, a novel method for sequential policy comparison with general progress metrics. 
    \item We prove that~\OurMethod~is statistically rigorous in the sense of controlling Type-1 Error, and empirically demonstrate its improved sample efficiency compared to state-of-the-art (SOTA) baselines.
    \item We comprehensively evaluate~\OurMethod~on large, high-impact evaluation datasets in robotics comprising over 4500 hardware evaluation rollouts and 2000 high-fidelity simulation rollouts \cite{barreiros2025careful, atreya2025roboarena}.
    The results demonstrate significant savings in evaluation effort of up to 70\% over batch methods and 50\% over methods using binary success measures to rigorously certify policy improvements. 
\end{enumerate}

\section{Related Work}
\label{the_related_work}
Efficient evaluation is increasingly appreciated as a critical aspect of the robot policy design process. The fundamental concern of these approaches is the strict limit on available hardware data. A core focus lies in constructing proxy signals (e.g., using simulations or evaluator preferences) to circumvent this constraint. By contrast, statistical testing approaches seek to employ the available hardware data more efficiently. These approaches are complementary; improvements in one domain augment those in the other. 
\subsection{Robot Policy Evaluation}
\label{evaluation_methods}
\noindent
\textbf{Hardware and Simulation Evaluation.} In robot manipulation, evaluation constraints often limit evaluation to 10-60 trials; the statistical validity of such comparisons is often not reported~\cite{snyder2025your,kress2024robot}. To address this constraint, recent works have introduced standardized benchmarks~\cite{heo2023furniturebench,luo2025fmb,yang2019replab,khargonkar2024scenereplica,collins2023ramp}, cloud-based evaluation platforms~\cite{zhou2023train,liu2021ocrtoc,bauer2022real,zhou2025autoeval}, and distributed evaluation infrastructure~\cite{atreya2025roboarena} to increase available data. Further, recent works have proposed active sampling of policy-task pairs to improve efficiency~\cite{anwar2025efficient}. However, none of these approaches confer statistical rigor on downstream comparisons. Similarly, though policy evaluation via simulation~\cite{todorov2012mujoco,makoviychuk2021isaac,pumacay2024colosseum,liu2023libero,li2024evaluating} or via world models ~\cite{veorobotics2025,tseng2025scalable,guo2025ctrl,1x_world,li2025worldeval,quevedo2025evaluating,zhu2024irasim,huang2025enerverse} show promise, the sim-to-real gap \cite{aljalbout_reality_2025} makes rigorous comparison difficult. Some recent works have proposed statistically rigorous methods to combine small-scale hardware testing with large-scale simulation evaluation~\cite{badithela2025reliable,luo2025leveraging}, but these do not address policy comparison. 

\noindent \textbf{Evaluation Metrics.} Recent studies emphasize the importance of detailed rubrics and task progress scores~\cite{kress2024robot,barreiros2025careful,atreya2025roboarena} over traditional binary success metrics. In parallel, finetuning procedures~\cite{intelligence2025pi} and policy ranking problems~\cite{chiang2024chatbot, atreya2025roboarena} have introduced indirect signals of policy performance, including reinforcement learning (RL) rewards and human preferences. However, none of the approaches give rigorous methods for decision-making. Whereas the state-of-the-art procedure for policy comparison under binary success metrics~\cite{snyder2025your} is rigorous and sample-efficient, current methods for more informative signals are not. This paper introduces a novel test procedure that scales rigorous evaluation to general metrics.

\subsection{The Parametric Policy Comparison Problem}
\label{parametric_policy_comparison_methods}
The \emph{parametric} setting for policy comparison arises when the evaluator designs the performance measure to have definite distributional structure (e.g., binary success or discrete partial credit). This structure specifies a tradeoff between the generality, sample efficiency, and correctness of the comparison. 

Classic tests \cite{Fisher_1922, barnard_significance_1947, boschloo_1970} were designed to optimize power for Bernoulli (binary) outcomes. Some subsequent results use tools from sequential analysis \cite{siegmund1985sequential} to improve the sample efficiency \cite{wald_sequential_1945, wald_optimum_1948, lai_nearly_1988}. In this regime, the STEP procedure~\cite{snyder2025your} is state-of-the-art. Other developments extend the generality of these tests to broader classes of parametric distributions~\cite{lai_nearly_1994, turner_exact_2023}. However, simultaneously extending both parametric generality and sample efficiency is difficult, requiring an exact solution of the Wald-Bellman partial differential equation~\cite{van_moerbeke_optimal_1974, lai_nearly_1988, caffarelli_geometric_2005}. For more complex parametric families, finding this solution quickly becomes intractable \cite{Chernoff_1986, fauss_minimax_2020}. 

This motivates work that extends generality by making approximations in the large-data regime. A canonical example is Welch's t-Test~\cite{welch_generalization_1947}, which is often used for batch comparison~\cite{barreiros2025careful, duan_benchmarking_2016, west_best_2021}. Further \emph{asymptotic} results~\cite{bickel_mathematical_2015} can extend generality~\cite{chernoff_sequential_1961, chernoff_sequential_1965_III, chernoff_sequential_1965_IV} \emph{and} improve sample efficiency~\cite{lai_optimal_1973, lai_boundary_1976, lai_first_1977, lai_boundary_1988}, at the cost of losing rigorous statistical assurances. This cost is significantly more pronounced in the low-data regime common to robotic evaluation~\cite{kress2024robot}. Thus, in our setting, conclusions derived from asymptotic approaches are difficult to assess, as their validity is not guaranteed in the low-data regime.

\subsection{Nonparametric Methods and Safe, Anytime-Valid Inference}
\label{nonparametric_methods}
Nonparametric methods seek to exchange some exploitable structure for generality in application. Within the batch regime, these are often termed `distribution free,' as in the case of conformal prediction and related techniques \cite{vovk_algorithmic_2022, shafer_tutorial_nodate, angelopoulos_gentle_2022}. However, they have two downsides: limited capacity to represent the underlying data distribution, and limited generalization to sequential settings. Kernel density estimation (KDE) addresses the first constraint by directly constructing a representation of the data-generating process \cite{parzen_estimation_1962}; see \cite{chen_tutorial_2017} for an overview. Critically, KDE is more efficient in low-dimensional settings \cite{jiang_uniform_2017} because it can quickly adapt to structure in the data. However, alone, it is not generally valid in finite samples \cite{wasserman_all_2006}. 

To address the second constraint, work in the line of safe, anytime-valid inference (SAVI) considers \emph{sequentialization} of estimation and decision problems, which act in an \emph{online fashion} \cite{ramdas2023game}. A core benefit is safety in the face of p-hacking \cite{john2012measuring}, or `data dredging'~\cite{grunwald_safe_2024, ramdas2023game}. Additionally, though not parametric, the framework does allow for methodological fine-tuning to the particular problem setting (as in \cite{cho_peeking_2024, lindon_anytime-valid_2025, turner_exact_2023}). Of these approaches, the WSR method~\cite{waudby2024estimating} is closest to our own, considering the problem of estimating the mean of a general class of random variables (see~\Cref{defn:progress_metric}). Importantly, ~\OurMethod~ utilizes ideas from KDE to tailor a SAVI framework \emph{specifically to the comparison problem}, achieving state-of-the-art generality and sample-efficiency. 

\section{Preliminaries}
\label{the_preliminaries}
We model a robot that must complete a task while subject to stochastic uncertainty in its environment as a Partially Observable Markov Decision Process (POMDP) \cite{kaelbling1998planning}. The uncertainty might arise in the initial environment configuration or in the robot's action selection. We assume the existence of a real-valued scalar performance measure~$R$ encoding the degree of success the robot demonstrates in completing the task. Conditioned on an arbitrary robot policy $\pi$ mapping state observations to actions, the stochastic environment uncertainty is compressed to uncertainty over performance outcomes (i.e., by running the policy and measuring its level of success).\footnote{This can be thought of as a pushforward measure (subject to necessary regularity assumptions).} Thus, all the potential complexities of the task, real-world system dynamics, and policy class are compressed to a (possibly complicated) distribution over the performance measure $\mathcal{D}_R$. Our goal is to make rigorous comparisons for as general a class of $\mathcal{D}_R$ as possible. With this in mind, we make a single restriction: that $R$ be a \emph{progress metric}, defined as a performance measure that is bounded w.p.~1.
\begin{defn}[Generalized Progress Metric]
\label{defn:progress_metric}
    A real-valued random variable $M(\omega)$ is termed a `\emph{generalized progress metric}' if it is bounded; that is, $\mathbb{P}[M \in [0, 1]] = 1$.\footnote{This immediately extends to real-valued performance measures that are bounded in an interval $[A, B]$ via normalization \cite{waudby2024estimating, jang_tighter_2023, orabona_tight_2024}.} 
\end{defn}
Unless stated otherwise, we assume $R$ is a progress metric as per \Cref{defn:progress_metric} but make no other structural assumptions. 

We conclude this section with a formalization of the comparison problem in the robotics context. In the most general form, we are tasked with quickly and accurately comparing the means of two distributions over a progress metric~$R$: 
\begin{equation*}
    \text{Test: } \mathbb{E}_{r \sim \mathcal{D}_R^{[0]}}[r] <  \mathbb{E}_{r \sim \mathcal{D}_R^{[1]}}[r]. 
\end{equation*}
This encompasses many practical problems, for example: comparing two policies with different architectures or trained on different data ($\pi_0$ and $\pi_1$), comparing distribution shift in task realizations for a single policy, or comparing the effects of different hyperparameters in training. Importantly, because the true means are unknown and not directly observable, a testing decision must be made on the basis of empirical performance in evaluation. Due to stochastic uncertainty in gathering this data, any decisions are inherently \emph{probabilistic} over the collection of evaluation data. Thus, we seek statistical assurances that bound the probability of an incorrect inference:
\begin{equation}
\label{example_conclusion_equation}
    \begin{split}
    & \text{If: } \hspace{5mm} \mathbb{E}_{\mathcal{D}_{R}^{[0]}}[R] \geq \mathbb{E}_{\mathcal{D}_{R}^{[1]}}[R] \\
    & \text{Then: }
    \mathbb{P}\bigg[\text{Wrongly decide }
    \mathbb{E}_{\mathcal{D}_{R}^{[0]}}[R] < \mathbb{E}_{\mathcal{D}_{R}^{[1]}}[R]\bigg] \leq \alpha^*.
    \end{split}
\end{equation}
The challenges to obtaining a result in the form of \Cref{example_conclusion_equation} are twofold. First, the decision must be statistically rigorous---it must safeguard against inadvertently reporting statistical noise as genuine improvement. This can require a substantial number of evaluations when the difference is small or the desired confidence $1 - \alpha^*$ is very high. However, the result also must be obtained as quickly as possible, so that the evaluator can efficiently allocate hardware or computational resources to begin investigating other problems. Indeed, allocating limited evaluation resources is a major bottleneck in improving policy performance \cite{snyder2025your}. The sequential nature of~\OurMethod~balances statistical confidence and speed of decision-making, adaptively making decisions when precisely enough evidence has accumulated and minimizing unnecessary evaluation trials. 
\section{Problem Formulation}
\label{the_problem_formulation}
\Cref{the_preliminaries} introduced the policy comparison problem, which seeks guarantees as in~\Cref{example_conclusion_equation}. The challenge in designing a test is the complexity of $\mathcal{D}_R^{[i]}$, as our procedure must be robust to \emph{any such progress metrics arising in practice}. 

\subsection{The Formal Hypotheses}
\label{constructing_the_hypotheses}
We formulate the problem in the context of frequentist statistical testing \cite{bickel_mathematical_2015}. To do so, we construct a general null (skeptical) hypothesis and an alternative (desired) hypothesis which reflect \emph{all possible} cases of \Cref{example_conclusion_equation}. 

In the general nonparametric case (see \Cref{nonparametric_methods}), we consider the set $\mathcal{M}_{[0, 1]}$ of all Lebesgue-measurable distributions on the real interval $[0, 1]$ (i.e., all progress metrics per \Cref{defn:progress_metric}). We partition $\mathcal{M}_{[0, 1]}$ into two parts:\footnote{For a parametric family $\Theta$ (e.g., Bernoulli distributions), the same formalism holds with a restriction to $\Theta^2 \subseteq \mathcal{M}_{[0, 1]}^2$}
\begin{equation*}
    \begin{split}
    S^{-} & := \{(\mathcal{D}_R^{[0]}, \mathcal{D}_R^{[1]}) \in \mathcal{M}_{[0, 1]}^2: \mathbb{E}_{\mathcal{D}_{R}^{[0]}}[R] \geq \mathbb{E}_{\mathcal{D}_{R}^{[1]}}[R]\}  \text{ and} \\
    S^{+} & := \{(\mathcal{D}_R^{[0]}, \mathcal{D}_R^{[1]}) \in \mathcal{M}_{[0, 1]}^2: \mathbb{E}_{\mathcal{D}_{R}^{[0]}}[R] < \mathbb{E}_{\mathcal{D}_{R}^{[1]}}[R]\}.
    \end{split}
\end{equation*}
Comparing the means amounts to formulating null and alternative hypotheses as $\mathcal{H}_0: (\mathcal{D}_R^{[0]}, \mathcal{D}_R^{[1]}) \in S^{-}$ and  $\mathcal{H}_1: (\mathcal{D}_R^{[0]}, \mathcal{D}_R^{[1]}) \in S^{+}$, respectively. 

These can be intuitively understood as ``all possible true states of the world in which the novel innovation is not better" ($\mathcal{H}_0$, the `skeptic'), and ``all possible true states of the world in which the novel innovation is better" ($\mathcal{H}_1$, the `desired result'). 
\Cref{example_conclusion_equation} amounts to being $1 - \alpha^*$ confident that the true state is \emph{not} in $S^-$. 

\subsection{Measuring the Quality of an Evaluation Algorithm}
\label{the_meta_metrics}
Having formulated the test hypotheses $\mathcal{H}_0$ and $\mathcal{H}_1$, we consider how to measure the quality of any potential evaluation scheme. These measures can be understood intuitively: the optimal evaluation protocol should \emph{make correct decisions} and \emph{make the decisions as quickly as possible}. 

There are precise mathematical analogues of these desiderata. In particular, there are two types of errors which can be made. First, in the case that $\mathcal{H}_0$ is true, the evaluation algorithm may incorrectly decide that $\mathcal{H}_1$ is true. This is termed a false positive, or a `Type-1 Error.' The rate at which such errors occur (under a particular decision-making protocol) is denoted $\alpha \in (0, 1)$. \Cref{example_conclusion_equation} precisely amounts to the statement ``the Type-1 Error rate of our evaluation method must be less than $\alpha^*$." In the case that $\mathcal{H}_1$ is true, the protocol may incorrectly decide that $\mathcal{H}_0$ is true. This is a false negative, or a `Type-2 Error.' The rate at which this occurs is denoted $\beta \in (0, 1)$. Semantically, a Type-1 Error corresponds to reporting a false discovery --- that the new policy is better \emph{when it actually is not}. A Type-2 Error corresponds to a missed discovery --- failing to discern that the new policy is better \emph{when it actually is}. These two error types combine to give a complete accounting of the algorithm's correctness. The last metric pertains to sample efficiency: how long the algorithm takes to make a decision, denoted $\mathbb{E}[N]$.\footnote{For technical reasons, sample efficiency must be posed with respect to a measure over $S^+$. This amounts in practice to prior beliefs over features like the size of the gap in performance; the reader may safely assume, e.g., an `uninformative' uniform measure.} As presented, an optimal evaluation algorithm --- one that is fast and correct --- minimizes all three of the measures $\{\alpha, \beta, \mathbb{E}[N]\}$. 

\subsection{Efficient Tests as a Multi-Objective Optimization}
Unfortunately, there are fundamental tradeoffs which prevent simultaneous minimization of all of these metrics. Therefore, we adopt the Neyman-Pearson testing approach \cite{neyman_1933}, which normatively chooses the Type-1 Error rate as the quantity to be rigorously controlled. This amounts to requiring that $\alpha \leq \alpha^*$ be treated as a hard constraint. Having made this selection, the problem of designing an evaluation procedure reduces to finding (near-)optimal solutions to the following multi-objective optimization problem: 
\begin{equation}
    \label{multi_objective_opt}
    \begin{split}
        \gamma^* = \arg \inf_{\gamma \in \Gamma} & \hspace{2mm} \bigg[ \mathbb{E}[N](\gamma) + \lambda \cdot \beta(\gamma) \bigg] \\
        \text{s.t.} & \hspace{2mm} \alpha(\gamma) \leq \alpha^*.
    \end{split}
\end{equation}
Here, $\Gamma$ denotes a space of decision-making rules (evaluation procedures) and $\lambda \in [0, \infty)$ is a nonnegative parameter trading off the expected time to decision and false negative rate. 

Efficiently solving \Cref{multi_objective_opt} results in an evaluation procedure that is guaranteed to maintain statistical rigor and quickly and effectively detects changes in performance. This provides the roboticist with the capacity to quickly iterate on new innovations, while ensuring justified and well-calibrated confidence in reported performance improvements. 
\section{Methodology}
\label{the_methodology}
We motivate the~\OurMethod~procedure via the construction of a process to distinguish $S^-$ from $S^+$. Intuitively, we will construct a scalar-valued dynamical system that behaves fundamentally differently when $\mathcal{H}_0$ is true than when $\mathcal{H}_1$ is true, based on the empirical evidence observed during evaluation. This difference is precisely in the sense of being stable in the former case and unstable in the latter. The testing problem then reduces to assessing the stability properties of the dynamical system; this assessment can be rigorously quantified within the SAVI framework described in \Cref{nonparametric_methods}. 

\subsection{Constructing an `Evidence Integrator'}
\label{the_general_recipe}
A natural correlate to $\mathcal{H}_1$ is the difference in the evaluation reward; if we are on the $n^{th}$ evaluation trial, we consider:
\begin{equation*}
    \Delta \text{ evidence}_n \approx \bigg[\xi \cdot (r_{1, n} - r_{0, n})\bigg], 
\end{equation*}
where $\xi > 0$ is a positive scaling factor and $r_{0, n}$, $r_{1, n}$ are the observed progress values. Intuitively, the evidence for $\mathcal{H}_1$ is positive when $r_{1, n} > r_{0, n}$ and negative otherwise. Note that by assumed independence of the evaluation trials, this relationship does not depend on $n$. What remains is to design the appropriate rate of integration of this evidence. Results in the SAVI literature and across a variety of statistical estimation contexts suggest that the optimal aggregation rate~\cite{ramdas2023game} is \emph{multiplicative}, resulting in a measure of aggregate evidence $X_n$ (setting $X_0 = 1$ w.l.o.g.) that evolves according to:
\begin{equation}
\label{the_evidence_integrator}
    X_{n+1} = (1 + \xi \cdot (r_{1, n} - r_{0, n}))X_n. 
\end{equation}
Therefore, when the evidence is positive for $\mathcal{H}_1$, the growth rate is greater than one, and the system is locally unstable. Conversely, when it is negative (i.e., in favor of $\mathcal{H}_0$), the growth rate is less than one and the system is stable. We can additionally optimize $\xi_n = g(\mathcal{F}_{n-1})$ online based on the evidence accumulated so far (the `natural filtration' $\mathcal{F}_{n-1}$), as long as the choice of $\xi_n$ is independent of $(r_{0, n}, r_{1, n})$.

The last step is designating an evidence threshold, amounting to the idea that ``$X_n$ has become sufficiently unstable as to provide sufficient evidence that the true state of the world is not in $\mathcal{H}_0$." We designate the threshold to be $1/\alpha^*$, for a desired Type-1 Error rate $\alpha^*$ of the test procedure. 
\begin{equation}
\label{stopping_rule}
    \begin{split}
    & \text{If: } \hspace{5mm}X_n \geq \frac{1}{\alpha^*} \\
    & \text{Then: }
    \bigg[\text{Stop at step }n, \text{ and conclude $\mathcal{H}_1$ is true.} \bigg]
    \end{split}
\end{equation}
This procedure is operationalized in \Cref{the_nsm_algorithm}. As shown in \Cref{the_nsm_analysis}, this evaluation protocol efficiently balances time-to-decision and correctness, while rigorously controlling the Type-1 Error rate at tunable, pre-specified level~$\alpha^*$. These properties enable rigorous confidence in comparison problems (yielding statements of the form of \Cref{example_conclusion_equation}) while minimizing the evaluation burden necessary to reliably form them. 

\begin{algorithm}[t]
\caption{~\OurMethod~Evaluation Protocol}\label{the_nsm_algorithm}
\begin{algorithmic}
\STATE \textbf{Input: } 
\STATE Type-1 error limit $\alpha^* \in (0, 1)$, evaluation limit $N_{\text{max}}>0$.
\STATE \textbf{Initialize: } 
\STATE $X_0 = 1$; $\bar{X} = 1$; $\xi_0 = 0$; $\mathcal{F}_0 = \{\emptyset\}$; $n=1$.
\WHILE {$\bar{X} < 1/\alpha^*$ \AND $n \leq N_{max}$}
\STATE Observe evaluation progress scores: $r_{0, n}$, $r_{1, n}$
\STATE Compute increment: $a_{n-1} = 1 + \xi_{n-1}(r_{1, n} - r_{0,n})$
\STATE Update martingale: $X_n \gets a_{n-1}\cdot X_{n-1}$
\STATE Update test statistic: $\bar{X} \gets \max \{\bar{X}, X_n\}$
\STATE Update filtration: $\mathcal{F}_n \gets \mathcal{F}_{n-1} \cup (r_{0, n}$, $r_{1, n})$
\STATE Update $\xi_n \gets \text{proj}_{[0, 1]} \hspace{1mm} g(\mathcal{F}_n)$
\STATE $n \gets n + 1$
\ENDWHILE
\IF {$\bar{X} < 1/\alpha^*$}
\RETURN \text{Fail to Reject Null}
\ELSE 
\RETURN \text{Reject Null}
\ENDIF
\end{algorithmic}
\end{algorithm}

\subsection{Theoretical Properties of \Cref{the_nsm_algorithm}}
\label{the_nsm_analysis}
We briefly present several key theoretical results of the evaluation protocol effectuated in \Cref{the_nsm_algorithm}. The import of these results, along with proof sketches, are included \emph{in situ}. Detailed proofs are deferred to the Supplement. 

We begin with a critical lemma pertaining to a property of the evidence aggregation process in \Cref{the_evidence_integrator}. This property amounts to a statement that the stochastic process $X_n$ is stable in expectation \emph{for all} elements $h \in \mathcal{H}_0$. 
\begin{lem}[Null Stability (NSM) Property]
\label{nsm_lemma}
    Consider the stochastic process $\{X_n\}$ defined in \Cref{the_evidence_integrator}, setting (w.l.o.g.) $X_0 = 1$. Then the expectation of $\{X_n\}$ is contracting in time with respect to the current value, for all $h \in S^{-}$ for any $\xi_n \in [0, 1]$. That is: 
    \begin{equation}
        \label{nsm_property_equation}
        \sup_{h \in S^-} \hspace{1mm} \sup_{\xi_n \in [0, 1]} \mathbb{E}_{r_0, r_1 \sim h} \bigg[\frac{X_{n+1}}{X_n} \bigg\rvert \mathcal{F}_{n}\bigg] \leq 1.
    \end{equation}
\end{lem}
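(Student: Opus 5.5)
The plan is to compute the one-step conditional expectation directly and then exploit linearity in the increment. Since $X_{n+1} = (1 + \xi_n (r_{1,n+1} - r_{0,n+1})) X_n$, the ratio $X_{n+1}/X_n$ equals $1 + \xi_n (r_{1,n+1} - r_{0,n+1})$. This holds whenever $X_n > 0$. Positivity comes first: because $r_{i,\cdot} \in [0,1]$ almost surely by \Cref{defn:progress_metric}, the difference lies in $[-1,1]$. With $\xi_n \in [0,1]$ (enforced by the projection in \Cref{the_nsm_algorithm}), each multiplicative factor is therefore nonnegative. A straightforward induction from $X_0 = 1$ then gives $X_n \geq 0$. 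On the event $X_n = 0$, the process stays at $0$ forever, and we adopt the convention $0/0 := 0$ (or, equivalently, state the bound as $\mathbb{E}[X_{n+1}\mid\mathcal{F}_n] \leq X_n$). So the ratio is well defined and nonnegative.

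Next I take the conditional expectation given $\mathcal{F}_n$. The key structural facts are these. First, $\xi_n = \text{proj}_{[0,1]} g(\mathcal{F}_n)$ is $\mathcal{F}_n$-measurable, so it can be pulled out of the conditional expectation. Second, the new pair $(r_{0,n+1}, r_{1,n+1})$ is drawn from $h = (\mathcal{D}_R^{[0]}, \mathcal{D}_R^{[1]})$ independently of $\mathcal{F}_n$, by the independence of evaluation trials. These give
\begin{equation*}
\mathbb{E}_h\!\left[\tfrac{X_{n+1}}{X_n}\,\middle|\,\mathcal{F}_n\right] = 1 + \xi_n\big(\mathbb{E}_{\mathcal{D}_R^{[1]}}[R] - \mathbb{E}_{\mathcal{D}_R^{[0]}}[R]\big).
\end{equation*}
Boundedness guarantees that both means exist and are finite. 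For $h \in S^-$, the bracketed difference is $\leq 0$ by definition of $S^-$. Since $\xi_n \geq 0$, the whole expression is $\leq 1$.

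The suprema are then immediate. The bound holds pointwise for every $h\in S^-$ and every value $\xi_n\in[0,1]$, because the expression is affine in $\xi_n$ with a nonpositive slope. Its maximum over $[0,1]$ is therefore attained at $\xi_n = 0$, with value exactly $1$. Taking the supremum over $h \in S^-$ preserves the bound, which yields \Cref{nsm_property_equation}. I note in passing that this shows $\{X_n\}$ is a nonnegative supermartingale under every null. That is the form needed to apply Ville's inequality to the stopping rule \Cref{stopping_rule} in the subsequent Type-1 error result.

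The only delicate step is a bookkeeping point rather than a real obstacle. It concerns the measurability and independence of $\xi_n$ relative to the next observation, i.e., the requirement that $\xi_n$ be predictable. The indexing in \Cref{the_nsm_algorithm} (which uses $\xi_{n-1}$ with $(r_{0,n}, r_{1,n})$) has to be matched to the statement's $X_{n+1}/X_n$. I will fix the convention that the multiplier applied at step $n+1$ is computed from $\mathcal{F}_n$ only. Together with the $X_n = 0$ edge case and the $[0,1]$ range guaranteeing nonnegativity, this ensures the argument is rigorous.
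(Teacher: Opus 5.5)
Your proof is correct and takes essentially the paper's route: boundedness and linearity give $1 + \xi_n\big(\mathbb{E}_{\mathcal{D}_R^{[1]}}[R] - \mathbb{E}_{\mathcal{D}_R^{[0]}}[R]\big) \leq 1$ on $S^-$, and the adaptive multiplier is handled by its independence from the new trial. The paper splits this into a fixed-$\xi$ computation in the Lemma proof and a separate time-varying extension in the Theorem proof, where it factors $\mathbb{E}[\xi_n]\mathbb{E}[r_1 - r_0]$ unconditionally; your single conditional-expectation argument (pulling out the $\mathcal{F}_n$-measurable $\xi_n$) plus the nonnegativity and $X_n = 0$ bookkeeping is a cleaner version of the same idea, and it directly gives the conditional supermartingale property that Ville's inequality needs.
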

\Cref{nsm_lemma} is necessary for ensuring stability of the process in \Cref{the_evidence_integrator} because it rules out (with high probability) that evidence increments generated by any element of $\mathcal{H}_0$ will cause the process to grow by very much. This is a necessary step to ensure Type-1 Error control and enforce the hard constraint in \Cref{multi_objective_opt}, which is formalized in \Cref{nsm_theorem}. 
\begin{thm}[Type-1 Error Control of \Cref{the_nsm_algorithm}]
\label{nsm_theorem}
    Consider the evaluation procedure in \Cref{the_nsm_algorithm}, utilizing the process defined in \Cref{the_evidence_integrator}. Then 
    \begin{equation}
    \label{theorem_equation}
        \mathbb{P}\bigg[\mathcal{H}_0 \text{ is true} \bigg \rvert \text{ decide }\textbf{Reject Null} \bigg] \leq \alpha^*
    \end{equation}
\end{thm}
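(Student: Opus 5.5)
The plan is to read \Cref{theorem_equation} in its frequentist sense: for every $h \in S^{-}$, the probability that \Cref{the_nsm_algorithm} returns \textbf{Reject Null} is at most $\alpha^*$. This is the standard Type-1 Error control, in line with the Neyman--Pearson constraint in \Cref{multi_objective_opt}. With that reading, the event \textbf{Reject Null} is exactly the event $\{\max_{0\le n\le N_{\max}} X_n \ge 1/\alpha^*\}$, since the running maximum $\bar{X}$ is the test statistic. It therefore suffices to show that $\{X_n\}$ is a nonnegative supermartingale with $X_0=1$ under any fixed $h\in S^-$, and then apply Ville's Inequality.

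\textbf{Step 1: nonnegativity.} The projection in \Cref{the_nsm_algorithm} gives $\xi_{n}\in[0,1]$, and by \Cref{defn:progress_metric} we have $r_{0,n},r_{1,n}\in[0,1]$. Hence $r_{1,n}-r_{0,n}\in[-1,1]$, so each increment satisfies $a_{n-1}=1+\xi_{n-1}(r_{1,n}-r_{0,n})\ge 0$. By induction, $X_n\ge 0$ for all $n$.

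\textbf{Step 2: supermartingale property.} Fix any $h\in S^-$. The multiplier $\xi_n$ is $\mathcal{F}_n$-measurable, i.e.\ predictable, and the new pair $(r_{0,n+1},r_{1,n+1})$ is independent of $\mathcal{F}_n$. Therefore
\[
\mathbb{E}[X_{n+1}\mid\mathcal{F}_n] = X_n\,\mathbb{E}\bigl[1+\xi_n(r_{1}-r_{0})\mid\mathcal{F}_n\bigr] = X_n\bigl(1+\xi_n(\mu_1-\mu_0)\bigr)\le X_n ,
\]
because $\mu_1\le\mu_0$ on $S^-$ and $\xi_n\ge0$. This is precisely \Cref{nsm_lemma}, which I will invoke directly; multiplying by $X_n\ge0$ is harmless. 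Integrability holds trivially, since $0\le X_n\le 2^n$.

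\textbf{Step 3: Ville's Inequality.} For a nonnegative supermartingale with $X_0=1$, Ville's Inequality gives $\mathbb{P}_h[\sup_{n\ge0}X_n\ge 1/\alpha^*]\le \alpha^*$. The algorithm only inspects $n\le N_{\max}$ and stops at the first crossing. The rejection event is therefore contained in the event $\{\sup_n X_n\ge1/\alpha^*\}$, and the bound transfers. Early stopping does not matter here, because the bound is uniform over all times; this is the anytime-validity. Taking the supremum over $h\in S^-$ yields the claim.

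The main obstacle is interpretive rather than technical. Read literally, \Cref{theorem_equation} is a posterior probability of $\mathcal{H}_0$ given rejection, which cannot be bounded without a prior. I would state explicitly that the intended meaning is $\sup_{h\in S^-}\mathbb{P}_h[\text{Reject Null}]\le\alpha^*$, matching \Cref{example_conclusion_equation}. A secondary subtlety is that $\xi_n$ must be chosen using only $\mathcal{F}_n$, which the algorithm enforces, so that independence gives the conditional expectation in Step 2. The paired sampling of $(r_{0,n},r_{1,n})$ needs only independence across trials, not between the two policies within a trial.
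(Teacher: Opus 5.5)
Your proposal is correct and follows essentially the same route as the paper. Both arguments get nonnegativity from $\xi_n\in[0,1]$ and $r_{1,n}-r_{0,n}\in[-1,1]$, the supermartingale property from \Cref{nsm_lemma} with $\mathcal{F}_n$-measurable $\xi_n$, and then apply Ville's Inequality to the running maximum, reading the conclusion as $\sup_{h\in S^-}\mathbb{P}_h[\text{Reject Null}]\le\alpha^*$, which is what the paper actually proves when it calls this ``equivalent'' to \Cref{theorem_equation}. Your conditional-expectation handling of the time-varying $\xi_n$ is in fact cleaner than the paper's step, which only bounds the unconditional mean $\mathbb{E}[\xi_n]\,\mathbb{E}[r_{1,n}-r_{0,n}]$, and your caveat that the literal posterior reading is not what is proved is well taken.
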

\begin{proof}
    The proof uses \Cref{nsm_lemma} in concordance with Ville's Inequality \cite{Ville_1939} to bound the probability of observing large values of $X_n$. After verifying several ancillary conditions and matching constants, the bound can be computed directly. More detail can be found in the Supplement.  
\end{proof}
Enforcing Type-1 Error control ensures high confidence (at level $1 - \alpha^*$) that reported significant innovations and improvements to the state-of-the-art are strictly separated from the inherent noise in robotic evaluation. 
\begin{rem}[Efficient Optimization of $\xi_n$]
    \label{time_varying_xi}
    Consider the stochastic process family described in \Cref{the_evidence_integrator}, and let $\mathcal{F}_n$ be the natural filtration $\{(r_{0,i}, \hspace{1mm}r_{1,i})\}_{i=1}^{n-1}$ at step $n$. There exists an efficient algorithm to maximize (online) over $\{\xi_n\}_{n=1}^N$ the expected growth rate of $\{X_n\}_{n=1}^N$. 
\end{rem}
Intuitively, $\xi_n$ modulates the degree of confidence in marginal changes to $X_n$. Large $\xi_n$ grow the process more quickly when data is favorable, but are penalized more harshly when it is not. Identifying an effective strategy to modulate $\xi_n$ online is central to improving sample efficiency across a broad array of evaluation problems. To do this,~\OurMethod~utilizes intuition from kernel density estimation and the structure of the discrete partial credit setting to optimize $\xi_n$ via constructing explicit nonparametric representations of $\mathcal{D}_R^{[i]}$. This is represented as a family~\OurMethod$_{k}$, where $k \in \mathbb{N}$ is a real-valued parameter akin to the kernel bandwidth. For brevity, details about this optimization are deferred to the Supplement. However, we emphasize that this optimization has strong practical benefits, yielding a state-of-the-art near-optimal solution to \Cref{multi_objective_opt} for general progress metrics. 
\section{Experimental Results}
\label{the_experiments_section}
We evaluate~\OurMethod~across a broad suite of evaluation settings, including some of the largest available datasets for real-world robot comparison with task progress metrics, such as RoboArena~\cite{atreya2025roboarena} and the LBM~1.0 study~\cite{barreiros2025careful}. To organize the results, we tie them to three core research questions (RQs): 
\begin{enumerate}
    \item (\textbf{Sequential Evaluation}) Are there sample efficiency benefits of sequential evaluation?
    \item (\textbf{Informative Metrics}) Are there sample efficiency benefits from using more informative evaluation metrics than coarse binary success?
    \item (\textbf{Technical Novelty}) What are the benefits of ~\OurMethod~with respect to statistically valid sequential policy comparison approaches? 
\end{enumerate}

\subsection{Baseline Methods}
\label{the_baseline_methods_section}
We introduce three relevant baselines, proceeding in order of increasing generality. The recently proposed STEP procedure \cite{snyder2025your} demonstrated SOTA performance in the setting of binary success metrics. However, STEP is tailored to this narrow, but important setting and does not generalize to more informative metrics. Another recent work proposed a safe, anytime-valid inference approach to parametric comparison problems, motivated by settings with discrete partial credit \cite{turner_exact_2023}. This method, termed $\theta$-SAVI to denote its parametric nature, is more general than STEP, as it retains validity for any parametric comparison setting (defined in \Cref{parametric_policy_comparison_methods}). As with STEP, $\theta$-SAVI cannot extend to nonparametric performance metrics such as continuous-valued progress scores. The most general evaluation procedure is the test dual to the WSR estimation method \cite{waudby2024estimating}, which is also most similar among the baselines to our approach. Thus, the key comparison between WSR and~\OurMethod~will center on sample efficiency.

\subsection{RQ1 and RQ2: Sequential Evaluation with Informative Metrics}
\label{the_first_research_question}
Our experiments support the observations in \citet{snyder2025your} that sequential test procedures save significant time and resources for robotics evaluations under binary success metrics. We further observe the benefit of  sequential procedures to extend to more fine-grained evaluation metrics. 

\begin{table}[]
    \centering
    \resizebox{0.85\columnwidth}{!}{%
    \begin{tabular}{c|cccc}
    \hfill & \multicolumn{2}{c}{\textbf{Bernoulli Data}} & \multicolumn{2}{c}{\textbf{Nonparametric Data}} \\
    \toprule
        & TTD & Power & TTD & Power \\
        \midrule
        STEP & 95.1 & 0.953 & -- & -- \\
        SAVI & 117.6 & 0.962 & -- & -- \\
        ~\OurMethod$_{2}$ & 117.9 & 0.965 & -- & -- \\
        \midrule 
        ~\OurMethod$_{\infty}$ & 122.3 & 0.958 & 206.8 & 0.889 \\
        WSR & 224.8 & 0.592 & 247.3 & 0.840 \\
        \bottomrule
    \end{tabular}
    }
    \caption{\textbf{Average time-to-decision (TTD) and empirical power for each method on simulated data.} For Bernoulli data (left), TTD is averaged over all 35 alternative hypotheses, while Power is averaged only the 9 hardest alternative hypotheses corresponding to a gap of $0.1$ (all other 26 alternatives have empirical power 1). For each alternative, metrics are averaged over 250 independent redraws; each redraw has $N=1000$. We observe similar statistical power for all methods except WSR, which lags substantially due to high variance in the Bernoulli regime. For nonparametric data (right), TTD and power are averaged over 3000 independent redraws (there is only one alternative). Only~\OurMethod$_\infty$ and WSR are valid in the nonparametric regime; the remaining methods cannot be adapted to this case. We observe that~\OurMethod$_\infty$ outperforms WSR by approximately 15\% on average in terms of time-to-decision, while improving the empirical power by approximately 5 percentage points.} 
    \label{the_simulated_data_table}
\end{table}
\subsubsection{Results on Artificial Bernoulli Sequences}
The left-hand side of \Cref{the_simulated_data_table} shows the average time-to-decision for each baseline and~\OurMethod~on artificially generated Bernoulli data comprising 35 different distributions, where each baseline is tested on each distribution 250 times. In each instance, the batch evaluation size was set to $N = 1000$ samples per test. Further visualizations of these tests are deferred to the Supplement. The average time-to-decision is shown to be significantly less than 1000 samples, illustrating the practical benefit of early stopping when the evaluation problem is sufficiently easy as to not require the full batch allocation.

\begin{figure}
    \centering
    \includegraphics[width=\linewidth]{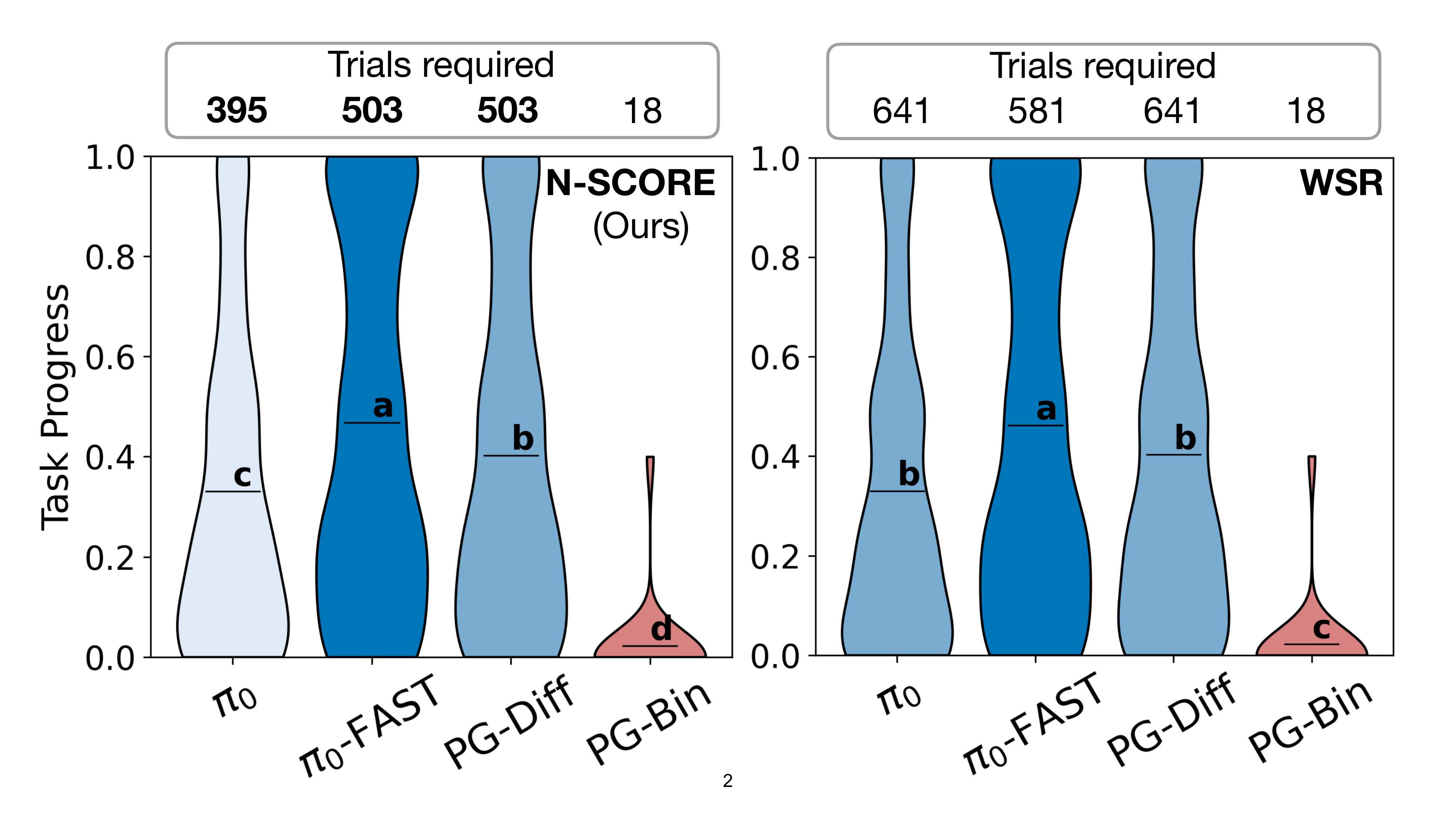}
    \caption{\textbf{Policy performance comparisons on crowd-sourced real-world evaluations on the DROID~\cite{khazatsky2024droid} setup from RoboArena~\cite{atreya2025roboarena}}. The violin plots represent empirical distributions of observed results. Policies with different letters are statistically distinguishable by the method. Policies are compared at a global error bound of $\alpha=0.05$ with a Bonferroni correction.}
    \label{fig:roboarena}
    \vspace{-2em}
\end{figure}

\subsubsection{Results on LBM 1.0 Binary Evaluation \cite{barreiros2025careful}}
We now consider real-world binary evaluation data for robot policies from the LBM~1.0 study results~\cite{barreiros2025careful}. The binary success metric results are on the right side of \Cref{table_of_real_data}. In this example, a pretrained and then finetuned policy is compared to a single-task policy trained from scratch to evaluate performance; it is desired to test whether the pretrained policy outperforms the single-task policy on challenging, long-horizon tasks. The tasks and rollout data come from the experimental setup of LBM 1.0~\cite{barreiros2025careful}. The top half considers robot performance comparison in high-fidelity simulator data. For each task and method, the time-to-decision (TTD) is recorded in terms of the number of evaluations per policy before a comparison outcome at level $\alpha = 0.05$ was reached. The total number of nominal batch evaluations is $2000$. We observe savings of $25\%-35$\% from the use of sequential evaluation procedures, resulting in an empirical reduction in the requisite number of simulations of approximately 500-700. In the bottom half, we consider the same comparison on hardware. In this setting, the nominal batch evaluation burden is $500$. We again observe significant savings, on the order of $16\%-25$\%, corresponding to a savings of up to $125$ total evaluations on this task suite. 

\begin{table}[t!]
    \centering
    \resizebox{\columnwidth}{!}{%
    \begin{tabular}{lcc|cccc}
    \toprule
    \hfill Method $\rightarrow$& \multicolumn{2}{c|}{\textbf{Progress Metrics}} & \multicolumn{4}{c}{\textbf{Binary Success/Failure Metrics}} \\
    Task $\downarrow$ &~\OurMethod$_{\infty}$ & WSR & STEP &~\OurMethod$_{2}$ & $\theta$-SAVI & WSR \\
    \midrule
    DumpVegetables  & 38 & 36 & 154 & -- & -- & -- \\
    PutContainer    & 33 & 36 & 169 & -- & -- & -- \\
    PutFruit        & 10 & 10 & 40  & 46 & 45 & 52 \\
    SeparateFruit    & 18 & 20 & 77  & 98 & 98 & 103\\
    TurnUpsideDown   & -- & -- & --  & -- & -- & -- \\
    \midrule
    Total (Simulation)  & 598 & 604 & 1280& 1488 & 1486 & 1510 \\
    \midrule
    \midrule
    BikeRotor        & -- & -- & -- & -- & -- & -- \\
    CutApple         & 29 & 29 & -- & -- & -- & -- \\
    CleanLitter      & 36 & 23 & -- & -- & -- & -- \\
    ClearCounter     & 16 & 15 & 23 & 25 & 30 & 46 \\
    SetUpBreakfast   & 12 & 13 & 15 & 19 & 19 & 17 \\
    \midrule
    Total (Hardware) & 286 & 260 & 376 & 388 & 398 & 426 \\
    \bottomrule
    \end{tabular}
    }
    \caption{\textbf{Time-to-decision for all \textbf{simulation (top)} and \textbf{hardware (bottom)} policy comparisons for evaluation context in \cite{barreiros2025careful}}. If a decision is not reached, the entry is left blank; for the purpose of computing evaluation savings, any blank entry is counted at $N$ trials. All simulation tasks utilize $N = 200$; all hardware tasks utilize $N = 50$. The total number of trials is the column sum multiplied by two (because there are two policies being evaluated, and each must be run). Thus, a column in the top half could require up to 2000 simulated trajectories; one in the bottom half could require up to 500 hardware evaluations. Several important observations are: (1) sequential evaluation saves significant evaluation effort over batch methods (right); (2) more informative partial credit metrics can provide \emph{even greater} savings (left). In fact, for these evaluations, the savings are up to 50\% on hardware and 70\% in simulation.}
    \vspace{-2em}
    \label{table_of_real_data}
\end{table}
RQ2 considers the efficiency of evaluation in settings which go beyond binary metrics. Following the structure of RQ1, we again consider significant evidence from both simulated and real-world evaluation data. Here, however, the core purpose is to illustrate the generality of nonparametric approaches, which open up opportunities for rigorous comparison of richer metrics, including RL rewards and continuous progress metrics.

\subsubsection{Results on Simulated Nonparametric Data}
\label{results_on_simulated_nonparametric_data}
We again start with results on simulated data, now generated from nonparametric densities. Random polynomials of order up to 10 are generated, and then rectified into an appropriate density (they are shifted and scaled to be everywhere nonnegative on $[0, 1]$ and to integrate to 1). This process is analytically opaque, making it difficult to express the resulting family of distributions in any parametric form. Thus, only~\OurMethod~and WSR are applicable. We run $3000$ comparison sequences of pairs of these distributions, limiting to cases where the gap in mean performance is at least $0.01$. $N = 1000$ for each sequence. The right-hand side of \Cref{the_simulated_data_table} shows summary statistics for~\OurMethod~and WSR, respectively. Again, the average time-to-decision is very low in comparison to the batch allocation, suggesting that distribution complexity does not impede efficient comparison, and does not attenuate the sample efficiency benefits of this evaluation paradigm. 

\subsubsection{Results on LBM 1.0 Partial Credit Evaluation~\cite{barreiros2025careful}}
We repeat the analysis of the task and rollout data obtained from \citet{barreiros2025careful}, as described in the preceding section, but using the discrete partial credit rubric instead of binary success metrics. Before each task was evaluated, partial credit measures (e.g., completion rate of $K$ subtasks) were designed to provide fine-grained information on the policy performance. In every task, this varied between six and eight partial credit outcomes; each subtask was weighted equally in the scoring. As before, the top half corresponds to policy evaluation in high-fidelity simulation, with a total evaluation budget of $2000$ rollouts. As shown in the left-hand side of \Cref{table_of_real_data}, partial credit metrics yield savings of approximately $70$\%, corresponding to a nearly $1400$-sample reduction in the number of evaluations for each sequential procedure. Further, this corresponds to an equivalent improvement over sequential binary methods, such as STEP, of over $50$\%. On hardware trials, the results are similar in trend. We observe approximately $45$\% reduction in the number of required evaluations with respect to the batch procedure (which arbitrarily chooses 50 runs per task per policy). This corresponds to a $24\%-30\%$ reduction in evaluation burden with respect to SOTA binary procedures like STEP \cite{snyder2025your}. The preceding evaluation gives strong empirical evidence for answering RQ1 and RQ2 in the affirmative: sequentialized evaluation reliably improves sample efficiency on artificial and real-world data across a wide range of metrics and uncertainty distributions.

\subsection{RQ3: Improving Sample Complexity Over Baselines}
\label{the_technical_novelty_results}
Finally, we discuss instances of performance differences between~\OurMethod~and related baselines. 

\subsubsection{Simulated Nonparametric Data}
From the right-hand side of \Cref{the_simulated_data_table}, there is a moderate gap in average time-to-decision, corresponding to average savings of around 15\% in evaluation burden. This can be understood as the aggregate effect of the efficient mechanism to optimize $\xi_n$ (see \Cref{the_nsm_algorithm}), which does not have a clear analogue in the WSR approach. Furthermore, neither STEP nor the $\theta$-SAVI approaches can be applied in this setting. 

Due to the generality of the nonparametric testing paradigm, the times-to-decision can be statistically compared using the~\OurMethod~framework. To do this, the 3000 runs are randomly partitioned into two sets, each of size 1500, corresponding to the time-to-decision of one of the two methods (\OurMethod~or WSR) on that data. These can be sequentially compared using the metrics $r_{i, n} = \frac{\text{ttd}_{i,n}}{N}$. Note that to run this evaluation in a parametric model would require dimension $N+1 = 1001$. The~\OurMethod~procedure returns a decision at $\alpha=0.05$ in approximately $525$ evaluations, corresponding to the hypothesis: ``the~\OurMethod~procedure has a lower time-to-decision than WSR on this class of nonparametric densities w.p. $\geq 0.95$."

\subsubsection{Simulated Bernoulli Data}
We consider the left-hand side of \Cref{the_simulated_data_table} to specifically consider the differences between sequential evaluation methods. Here, all baselines are applicable. We observe that STEP is optimal, as expected in this setting. However, among the remaining methods, $\theta$-SAVI and~\OurMethod~perform nearly identically, suggesting that our approach is able to efficiently approximate the available parametric structure. WSR, conversely, suffers substantially in the time-to-decision as compared to these approaches. 

\subsubsection{LBM 1.0 Success and Partial Credit}
Now, we revisit the results in \Cref{table_of_real_data} to consider the implications for each evaluation procedure. Here, we observe that STEP is again optimal for binary success measures, consistent with the previous section. For this data, $\theta$-SAVI,~\OurMethod~, and WSR are equally effective in both binary and discrete partial credit evaluation on this dataset.

\subsubsection{Results on RoboArena}
We continue the analysis of real-world evaluation data by illustrating the efficacy of~\OurMethod~, in addition to its applicability to multi-policy comparisons using continuous progress evaluation scores, in simultaneously comparing four policies from the open-source RoboArena~\cite{atreya2025roboarena} benchmark. Further details of the dataset are deferred to the Supplement. Because the rewards are continuous, only~\OurMethod~and WSR can be applied.~\Cref{fig:roboarena} illustrates the time-to-decision (TTD) in terms of the number of trials required for each policy by~\OurMethod~and WSR.~\OurMethod~is able to distinguish the performance of all policies. In contrast, while WSR is able to correctly distinguish the best policy as {\(\pi_0\)-FAST}, it is unable to separate {\(\pi_0\)} and {PG-Diff} even after exhausting all available 641 trials. Notably,~\OurMethod~requires over 200 fewer trials of {\(\pi_0\)}, and results in a total savings of at least \textbf{450 trials} (1419 vs. 1881). The efficacy of our method can be attributed to efficient optimization of \(\xi_n\) (see~\Cref{time_varying_xi}) with the available data, and due to WSR not being optimized for policy comparison. 

\begin{table}
    \centering
    \resizebox{\linewidth}{!}{%
    \begin{tabular}{lcc|cc}
    \toprule
    \hfill Comparison $\rightarrow$
        & \multicolumn{2}{c|}{\textbf{PPO vs. DDPG}}
        & \multicolumn{2}{c}{\textbf{SAC vs. TD3}} \\
    Task $\downarrow$ \hfill Method $\rightarrow$
        & WSR & ~\OurMethod$_{\infty}$
        & WSR & ~\OurMethod$_{\infty}$ \\
    \midrule
    Ant-v4                  & 27  & 21  & 14  & 13 \\
    HalfCheetah-v4          & 8   & 8   & 18  & 15 \\
    Hopper-v4               & 13  & 12  & 30  & 20 \\
    InvertedPendulum-v4     & 677 & \textbf{267} & --  & -- \\
    Humanoid-v4             & 42  & 40  & 96  & 89 \\
    Walker2d-v4             & 23  & 22  & 24  & 22 \\
    Pusher-v4               & 89  & 77  & 249 & 220 \\
    \midrule
    Total (14000 nominal)   & 1758 & \textbf{894} & 2862 & \textbf{2758} \\
    \bottomrule
    \end{tabular}
    }
    \caption{\textbf{Time-to-decision for selected \textbf{reinforcement learning} policy comparisons on Mujoco benchmarks.} If a decision is not reached, the entry is left blank; for the purpose of computing evaluation savings, any blank entry is counted at $N$ trials. All simulated tasks utilize $N = 1000$. We observe similar behavior between~\OurMethod~and WSR on easier instances (with lower times-to-decision); however, in harder instances significant improvements can be observed. In aggregate, the hard instances dominate sample complexity, resulting in substantial savings in evaluation burden.}
    \vspace{-2.0em}
    \label{table_of_rl_data}
\end{table}

\subsubsection{Real-valued Continuous Metrics}
\label{the_real_world_nonparametric_results}
The final example we consider is determining the best reinforcement learning (RL) policy based on continuous-valued episodic rewards, whose underlying distribution is difficult to model.~\Cref{table_of_rl_data} lists the time-to-decision in distinguishing popular RL algorithms (PPO, TD3, DDPG, and SAC) on Mujoco~\cite{todorov2012mujoco} benchmarks. Due to the continuous nature of the reward metric, only~\OurMethod~and WSR are applicable. We observe significant improvement over the WSR baseline when policies perform similarly but have high variance such as in the case of InvertedPendulum-v4. In this instance,~\OurMethod~saves over 400 trials when comparing PPO with DDPG. For the same benchmark, SAC and TD3 are highly effective, returning the maximum possible reward in each rollout, thereby leading to no statistical separation by either method. We provide further experimental details including mean episodic return and violin plots in the Supplement. 

As demonstrated by these extensive empirical validations, the key impact of our novel approach lies in effectively matching the sample efficiency of $\theta$-SAVI in parametric contexts (e.g., \Cref{the_simulated_data_table} and \Cref{table_of_real_data}), while maintaining the generality of, and improving sample efficiency over, the WSR procedure in nonparametric contexts (e.g., \Cref{table_of_rl_data} and \Cref{fig:roboarena}).
\section{Limitations and Future Work} 
\label{the_limitations_and_future_work}
There are several current limitations of~\OurMethod, which suggest the possibility for valuable future investigation. First, unlike STEP, any procedure using tools from safe, anytime-valid inference (SAVI) tends to achieve tighter Type-1 error control than specified, leaving some `risk budget' unused. This both explains the gap to STEP in the regime of binary evaluation metrics and the capacity for robust generalization to complex and nonparametric measures. 
Developing a finite-$N$ rectification to use the full available risk budget would be exceedingly valuable for a host of problems for which SAVI is currently applied.
Similarly, the current method for optimizing $\xi_n$ has connections to ideas in kernel density estimation, but at present the full insight of developments in the latter have not been applied to our approach. Using these tools and domain knowledge promises more efficiency and potential generalization to unbounded performance measures. 

We note that, while SAVI methods naturally hinder some avenues towards inadvertent data dredging, they rely crucially on i.i.d. evaluation data. Moreover, rigorous guarantees are only meaningful if the evaluation paradigm is similarly rigorous and reproducible. As such, these methods are inherently dependent on careful and measured evaluation procedures. 
\section{Conclusion} 
\label{the_conclusion}
We have introduced and validated a novel procedure for statistically rigorous sequential evaluation of robot policies, generalized to metrics that go beyond binary success and failure rates. In so doing, we have highlighted the practical benefits of sequential methods and informative metrics to reduce evaluation burden, and situated our approach as a novel synthesis of two state-of-the-art sequential evaluation procedures. Each of these results is validated by substantial empirical evidence spanning simulated and real-world evaluation data. The promise of such results is to both codify \emph{and accelerate} progress within the field, by ensuring reliable performance improvements and minimizing the requisite evaluation burden necessary to confirm them.

\section*{Acknowledgments}
D. Snyder acknowledges support from the Toyota Research Institute (TRI) and the Penn AI Fellowship. A. Badithela is supported by the Presidential Postdoctoral Fellowship. Additionally, the authors were partially supported by the NSF Career Award \#2044149, the NSF SLES Award \#2331880, and the Sloan Fellowship. TRI provided funds to assist the authors with their research; this article solely reflects the opinions and conclusions of its authors, and not TRI nor any other Toyota entity.

\bibliographystyle{plainnat}
\bibliography{main}

\newpage{}
\appendices

\renewcommand{\thefigure}{A.\arabic{figure}} 
\renewcommand{\thetable}{A.\Roman{table}}    
\setcounter{figure}{0}                       
\setcounter{table}{0}                        

\section{Analytical Results}
\label{analytical_supplement_results}
In this section, we present the proofs for key theoretical results in the main paper, in the order that they are introduced in \Cref{the_methodology}.
\subsection{Proof of \Cref{nsm_lemma}}
\label{proof_of_nsm_lemma}
Consider the stochastic process increment in \Cref{nsm_property_equation}, interpreted as the `approximate marginal evidence increment' represented in \Cref{the_evidence_integrator}. We need to show that:
\begin{equation}
    \sup_{h \in S^-} \sup_{\xi \in [0, 1]} \mathbb{E}_{r_0, r_1 \sim h} (1 + \xi(r_1 - r_0)) \leq 1.
\end{equation}
The property arises directly out of the boundedness assumption of general progress metrics and the linear separability of $r_0$ and $r_1$ in the increment computation. From boundedness on $r_0$ and $r_1$, we know that the increment is bounded w.p. 1 in $[1-\xi, 1 + \xi] \subseteq [0, 2]$. Therefore, the increment has well-defined moments. 

The exact value can be computed as: 
\begin{equation*}
    \begin{split}
    \mathbb{E}[1 + \xi(r_1 - r_0)] & = \int 1 + \xi(r_1 - r_0) d\mu_{0, 1}\\
    & = \int_0^1 \int_0^1 1 + \xi r_1 - \xi r_0 d\mu_0 d\mu_1 \\
    & = \int_0^1 1 + \xi r_1 - \xi \mathbb{E}_{D_R^{[0]}}[R] d\mu_1 \\
    & = 1 + \xi \mathbb{E}_{D_R^{[1]}}[R] - \xi \mathbb{E}_{D_R^{[0]}}[R] \\
    & = 1 + \xi (\mathbb{E}_{D_R^{[1]}}[R] -\mathbb{E}_{D_R^{[0]}}[R]) \\ 
    & \leq 1 \Longleftarrow (\mathbb{E}_{D_R^{[1]}}[R] -\mathbb{E}_{D_R^{[0]}}[R]) \leq 0.
    \end{split}
\end{equation*}
The last term is precisely the definition of $S^-$; therefore, we have shown the nonnegative (super)martingale property holds precisely for any instance in which the null hypothesis is true.\footnote{If we restrict $\xi \in (0, 1)$, then the relation holds bidirectionally, i.e., with $\iff$. } This argument directly extends to any parametric progress metric setting, due to the nature of the increment construction. Thus, it holds for binary, discrete, and continuous valued bounded metrics. 

\subsection{Proof of \Cref{nsm_theorem}}
\label{proof_of_nsm_theorem}
We now utilize the results of \Cref{proof_of_nsm_lemma}, which demonstrated the equivalent of the nonnegative supermartingale (NSM) property of the evidence aggregation \emph{when the true state of the world lies within the set $S^-$}, to prove \Cref{nsm_theorem}. 
\subsubsection{Verifying Ancillary Conditions}
Note that by construction, $\xi_n \in [0, 1]$ and $(r_{1, n} - r_{0, n}) \in [-1, 1]$ implies that 
\begin{equation*}
    \Delta_n = \bigg[1 + \xi_n(r_{1, n} - r_{0, n}) \bigg]\in [0, 2] \text{ w.p. 1}.
\end{equation*}
Therefore,
\begin{equation}
    \label{stochastic_process_equation}
    X_n := X_0\prod_{i=1}^{n} \Delta_i \geq 0 \text{ }\forall n \geq 1. 
\end{equation}
This verifies nonnegativity. 

\subsubsection{Ville's Inequality}
Ville's Inequality is the critical mechanism whereby the expectation of a nonnegative supermartingale process (see \Cref{nsm_definition}) can be linked to right-tailed quantiles of its realized behavior. 
\begin{defn}[Nonnegative Supermartingale (NSM)]
    \label{nsm_definition}
    \text{ }
    
    Consider a discrete-time stochastic process $\{X_n\}_{n\geq 0}$ equipped with the natural filtration $\mathcal{F}_{s} = \{X_i\}_{i=0}^{s-1}$,\footnote{The natural filtration in this context is simply the available information on which a causal algorithm may act. Consistent with this semantic meaning, we note for completeness that $\mathcal{F}_0 = \{\emptyset\}$. } and w.l.o.g. let $X_0 = 1$. The process $\{X_n\}_{n \geq 0}$ is a \textbf{nonnegative supermartingale (NSM)} if it is everywhere nonnegative and contracting in expectation with respect to the filtration: 
    \begin{equation}
    \begin{split}
        \inf_n & \{X_n\}_{n \geq 0} \geq 0 \text{ w.p. 1} \\
        \forall n \text{, }& \mathbb{E}[X_{n} \rvert \mathcal{F}_n] \leq X_{n-1}
    \end{split}
    \end{equation}
\end{defn}
Intuitively, an NSM is a `stable process' in that it is lower bounded by $0$ and contracting in expectation. This stability is the intuitive mechanism from which Ville's Inequality arises. 
\begin{VI}
Let $\{X_n\}_{n \geq 0}$ be a nonnegative supermartingale. Then for any $\alpha \in (0, 1)$, 
\begin{equation}
    \label{villes_inequality}
    \mathbb{P}\bigg[\exists n \in \mathbb{N} : X_n \geq \frac{\mathbb{E}[X_0]}{\alpha}\bigg] \leq \alpha.
\end{equation}
\end{VI}
The critical interpretation of this result is that, for all $n \geq 0$, the $1-\alpha$ quantile of $X_n$ is upper bounded by $\mathbb{E}[X_0]/\alpha$ for any $\alpha \in (0, 1)$. Importantly, this means that the result holds even for optional (i.e., selective) determination of a time of decision.\footnote{In the language of stochastic processes and sequential analysis, the time of decision is often referred to as the ``stopping time'' of the process. Adaptively selecting to stop and decide or to continue collecting data is then referred to as ``optional stopping.''}

\subsubsection{Time-Varying $\xi_n$}
\label{proof_of_time_varying_xi}
We now confirm that the use of time-varying $\xi_n = g(\mathcal{F}_{n-1})$, measurable with respect to the filtration, do not violate the Type-1 Error bounds in \Cref{proof_of_nsm_lemma}. This follows from the definition of the natural filtration and the independence of marginal evaluation outcomes. Specifically, we must modify the proof in \Cref{proof_of_nsm_lemma} to account for the conditional dependencies of $\xi_n = g(\mathcal{F}_{n-1})$ on the previous data. However, the change is minimal due to the independence of $\xi_n$ with the \emph{new data} $(r_{0, n}, r_{1, n})$. We include the modification for completeness: 
\begin{equation*}
    \begin{split}
        \mathbb{E}[1 + \xi_n & (r_{1, n} - r_{0, n})]  = \mathbb{E}[1] + \mathbb{E}[\xi_n(r_{1, n} - r_{0, n})] \\
        & = \mathbb{E}[1] + \mathbb{E}[\xi_n]\mathbb{E}[(r_{1, n} - r_{0, n})] \\
        & = 1 + \mathbb{E}[\xi_n](\mathbb{E}_{D_R^{[1]}}[R] -\mathbb{E}_{D_R^{[0]}}[R]) \\
        & \leq \arg \max \{1, 1 + (\mathbb{E}_{D_R^{[1]}}[R] -\mathbb{E}_{D_R^{[0]}}[R])\} \\
        & \leq 1 \Longleftarrow (\mathbb{E}_{D_R^{[1]}}[R] -\mathbb{E}_{D_R^{[0]}}[R]) \leq 0.
    \end{split}
\end{equation*}
The third line follows from the independence of $\xi_n$ and $(r_{0, n}, r_{1, n})$, and the maximum in the penultimate line arises from taking the extremal values (0 and 1) of $\mathbb{E}[\xi_n]$. We again observe that membership in the the null hypothesis is precisely sufficient to ensure the NSM property. 

\subsubsection{Completing the Proof}
We consider the stochastic process defined in \Cref{stochastic_process_equation} taking $X_0 = 1$. From \Cref{nsm_lemma} and the ancillary verification, we have that $\{X_n\}$ is a nonnegative supermartingale on $S^-$. Using Ville's Inequality, we conclude that, for any possible true state of the world represented by some $h \in S^-$, the probability that $\max_n \{X_n\}$ exceeds $1/\alpha^*$ is less than or equal to $\alpha^*$. Therefore, using the stopping rule defined in \Cref{stopping_rule}, the probability of falsely rejecting any true null $h \in S^-$ is uniformly bounded by $\alpha^*$. This is equivalent to the claim of \Cref{theorem_equation}. 

\subsection{Proof of \Cref{time_varying_xi}}
\label{proof_of_xi_optimization}
We describe in more detail the explicit nonparametric representation of the optimization problem for selecting $\xi_n = g(\mathcal{F}_{n-1})$.
As described in \Cref{the_methodology}, we draw inspiration from kernel density estimation to explicitly model the distribution of outcomes of new evaluation draws. We use a simple version KDE with a preset, uniform binning scheme. That is, we represent the distribution of evaluation scores for each policy $\pi_i$ with $k$ bins partitioning the interval $[0, 1]$. For the case of exact partial credit evaluation with $K$ outcomes, these bins can be chosen to precisely model the true underlying distribution when $k=K$. For nonparametric instances or cases with continuous densities, the choice of $k$ trades off greater accuracy in the representation ($k \uparrow$) against computational burden (which decreases as $k \downarrow$). 

Importantly: when constructing the martingale increments in \Cref{the_evidence_integrator} the exact (possibly continuously-valued) evaluation scores must be used to certify Type-1 Error control. However, no restriction is made with regard to how said data is used to sequentially construct $\xi_n$. Very informally, the algorithm to select the multiplier may `deceive itself' however it likes without violating rigorous validity -- it will simply risk being less efficient. This is precisely the key insight --~\OurMethod~will `pretend' that the data is parametric partial credit (via discretization of the observed performance scores) when choosing $\xi_n$, allowing for efficient optimization. Nonetheless, this discretization of the observed data for selecting $\xi_n$ does not invalidate \Cref{nsm_lemma}; it can only affect the efficiency of the process as measured by time-to-decision. This is in \emph{stark contrast with } $\theta$-SAVI, which requires that the \emph{true underlying distribution} be of a parametric (i.e., partial credit) form. 

As a concrete example of the binning procedure, utilizing eleven bins, we may sort the data by its first two significant digits. This is equivalent to 
\begin{equation}
    \text{bin\_index}(r_{i, n}) = \lfloor 10 \cdot r_{i, n} \rfloor \in \{0, 1, \ldots, 10\}. 
\end{equation}
This can then be seen as a lossy compression of the observed data, where we only represent its approximate value:
\begin{equation}
    \tilde{r}_{i, n} \gets \frac{\lfloor 10 \cdot r_{i, n} \rfloor}{10} \in \{0, 0.1, 0.2, \ldots, 1.0\}.
\end{equation}
The importance of this compression lies in reducing all (highly complex) distributions over general progress metrics to the (parametric) family of categorical distributions over $k$ outcomes, where $k$ is precisely the number of bins. 
Now, we fix the binning procedure to be shared for both policies, and denote the vector of $k$ compressed outcomes to be $\mathbf{c} \in \mathbb{R}_+^k$. We will w.l.o.g. assume henceforth that $\mathbf{c}$ is ordered from least to greatest, and that the elements $\mathbf{c}_i$ are distinct.\footnote{Distinctness is not restrictive; if any set of (semantic) outcomes has the same evaluation score, then they can be `lumped together' into a single composite outcome. The binning procedure itself is assumed to be a deterministic function of the evaluation outcome; therefore, it will always ensure distinction between outcomes it observes. The particular semantic meaning of a score, however, may not be directly observable.} The true underlying distributions over outcome scores are compressed to vectors on the $k$-simplex: 
\begin{equation}
    r_{i, n} \sim \mathcal{D}_R^{[i]} \implies \tilde{r}_{i, n} = \mathbf{c}_j; j \sim \text{Categorical}(\mathbf{p}_i). 
\end{equation}
There are various useful quantities which arise from this representation. The probability of each possible joint evaluation outcome (i.e., of $\pi_0$ and $\pi_1$) can be simultaneously represented as a $k \times k$ square matrix $P$: 
\begin{equation}
    P = \mathbf{p}_0 \mathbf{p}_1^T, 
\end{equation}
where $P_{ij}$ is understood to be `the probability that, for a new evaluation draw, $\pi_0$ gets a (compressed) score $\tilde{r}_{0, n} = \mathbf{c}_i$ and $\pi_1$ gets a (compressed) score $\tilde{r}_{1, n} = \mathbf{c}_j$.' Furthermore, the set of approximate evidence integrator outcomes can be represented by a $k \times k$ square matrix $A$, where: 
\begin{equation}
    A_{ij}(\xi_n; \mathbf{c}) = 1 + \xi_n (\mathbf{c}_j - \mathbf{c}_i). 
\end{equation}
To link this to \Cref{nsm_lemma}: in the special case of discrete partial credit structure, \Cref{nsm_lemma} amounts to demonstrating that the following statement is true \emph{under the null}:
\begin{equation}
    \begin{split}
    \label{null_condition_equation_in_discrete}
    & \sup_{\mathbf{p}_i} \langle A, \mathbf{p}_0\mathbf{p}_1^T\rangle \leq 1 \\ 
    \text{for all } & \hspace{2mm} \mathbf{p}_i \in \Delta_k \\
    & \hspace{2mm} \mathbf{c}^T(\mathbf{p}_1 - \mathbf{p}_0) \leq 0.
    \end{split}
\end{equation}
Conversely, \Cref{nsm_lemma} is sufficient to demonstrate that the above statement must be true, as the latter follows from the generality of the former.\footnote{This can be shown independently using properties of the matrix $A$ and some linear algebraic identities, but is outside the scope of the core intuition. } The key point here is that this is precisely a stability condition on the expectation of the evidence aggregator \emph{when the true state of the world is an element of the null set $S^-$.} The optimization of $\xi_n$, by contrast, relates to optimally \emph{de-stabilizing} the evidence aggregator when the the true state of the world is an element of the alternative set, $S^+$. In that setting, we very much wish for the expectation to be \emph{greater than one}, in contrast with \Cref{null_condition_equation_in_discrete}. 
\subsubsection{Intuition for Optimizing $\xi$}
With the preceding development, our KDE-inspired approach attempts to optimize $\xi$ over the lossy representation induced by the discretization (i.e., the nonparametric distribution representation as, approximately, a discrete partial credit random variable).\footnote{Doing this ever-more efficiently is precisely a subject of future work, as much of the specific domain knowledge of KDE is not present in our simplified implementation.} Unlike in verifying the NSM property (\Cref{nsm_lemma}), the linear algebraic representation of the partial credit problem provides insight into choosing $\xi_n$. Recall that the choice of $\xi_n$ does not affect Type-1 Error, and therefore does not affect any state of the world in which the null is true. Therefore, it will only be used to accelerate detection when the state of the world is such that the alternative is true. The core idea is to observe two phenomena arising out of realizations of $A_{ij}$: the `signal effect' and the `hysteresis effect.' 
\subsubsection{Signal Effect}
The `signal effect' amounts to direct evidence for the alternative. This arises when $r_{1, n} > r_{0, n}$; when this is the case, the multiplier $\Delta_n$ grows with $\xi_n$. Thus, greater likelihood of seeing positive differences in the metrics (`positive signals') promotes \emph{increasing} the value of $\xi$. This is linear in the \emph{asymmetric component} of $P$, in the following sense. Define
\begin{equation}
    \Delta P_{ij} := {P}_{ij} - {P}_{ji}\text{, for } j > i,
\end{equation}
and $0$ otherwise. By definition, $\Delta P$ is an upper triangular matrix. In general, if the matrix has more \emph{positive} elements, this is evidence that the alternative is more likely to be true. That is, $\Delta P_{ij} > 0$ for some $i < j$ means that the probability of observing $r_{1, n} - r_{0, n} = \mathbf{c}_j - \mathbf{c}_i > 0$ is \emph{larger than} the converse of $r_{1, n} - r_{0, n} = \mathbf{c}_i - \mathbf{c}_j < 0$. Considering all pairs $(i, j)$, we observe that $\Delta P$ precisely encodes \emph{asymmetry} in $P$ and its contribution to differences in the mean performance of $\pi_0$ vs $\pi_1$. Considering the expected martingale growth rate, we can observe now that this contributes to the growth linearly in $\xi$: 
\begin{equation}
    \begin{split}
        & \mathbb{E}\bigg[\frac{M_{n+1}}{M_n} \bigg\rvert (\tilde{r}_{0,n}, \tilde{r}_{1,n}) = (i,j)\text{, } j > i\bigg] \\
       & = \bigg((1 + \xi(\mathbf{c}_j - \mathbf{c}_i))\Delta P_{ij}\bigg) \\
       & = 1 + \xi (\mathbf{c}_j - \mathbf{c}_i) \Delta P_{ij}.
    \end{split}
\end{equation}
Therefore, the aggregate positive evidence that $\pi_1$ is better than $\pi_0$ is the sum of these pairwise effects: 
\begin{equation*}
    \sum_{i=1}^{k}\sum_{j=i}^k (1 + \xi(\mathbf{c}_j - \mathbf{c}_i))\Delta P_{ij}. 
\end{equation*}
As noted previously, the key idea of~\OurMethod~is to estimate the quantity $\Delta P$ from the data currently observed (i.e., the filtration $\mathcal{F}_n$) to choose an effective $\xi_n$. 

\subsubsection{Hysteresis Effect}
The signal effect generally pushes $\xi_n$ to be larger; by contrast, there is an opposing mechanism which induces it to shrink. This relates to the \emph{symmetric} component of the $P$ matrix, and is termed the `hysteresis effect.' This effect is so named because of how it manifests: symmetric aspects of $P$ correspond to `self-negating' outcomes (e.g., in which $\tilde{r}_{1, n}-\tilde{r}_{0, n} = -(\tilde{r}_{1, n-1} - \tilde{r}_{0, n-1})$. Direct inspection should convince the reader that difference in empirical performance between the policies has not changed from step $n-2$ to step $n$ (each has observed the same total return since step $n-2$). However, in the course of cycling through zero net change in mean performance difference, the value of $X_n$ has \emph{decreased} from $X_{n-2}$. Achieving both of the converse outcomes (i.e., $(i, j)$ and $(j, i)$) is reflected in the \emph{symmetric component} of $P$: the fraction of realizations of $A_{ij}$ which will be `counteracted' by realizations of $A_{ji}$. As just stated, these pairs of outcomes do not change the empirical gap between the policies, but they \emph{negatively impact} the $X_n$. The idea of losing value (in $X_n$) via a closed loop in net performance difference motivates the term `hysteresis.' Mathematically, we first define 
\begin{equation}
    \underline{P}_{ij} = \min\{P_{ij}, P_{ji}\}.
\end{equation}
This symmetric matrix quantifies the degree to which hysteresis plays a part. The effect on the stochastic process value can be observed via approximate Taylor Expansion (assuming for now that $\xi_n$ is slowly varying): 
\begin{equation}
    \begin{split}
    \label{hysteresis_effect_equation}
        X_{n+2} & = X_n (1 + A_{ij})(1+A_{ji})\underline{P}_{ij} \\
        \implies X_{n+2}/X_n & = (1 + \xi_n(\mathbf{c}_j - \mathbf{c}_i))(1 - \xi_{n+1}(\mathbf{c}_j - \mathbf{c}_i))\underline{P}_{ij} \\
        & \approx (1 + \bar{\xi}(\mathbf{c}_j - \mathbf{c}_i))(1 - \bar{\xi}(\mathbf{c}_j - \mathbf{c}_i))\underline{P}_{ij} \\ 
        & = 1 - \bar{\xi}^2(\mathbf{c}_j - \mathbf{c}_i)^2\underline{P}_{ij}.
    \end{split}
\end{equation}
This term acts to regularize the choice of $\xi_n$, because it suggests that, even with no net gain of information, the stochastic process will tend to decay, and that this decay is larger when $\xi_n$ is larger. Thus, hysteresis motivates a smaller $\xi_n$, opposing the signal effect. However, importantly, unlike the signal effect, the hysteresis effect is quadratic in $\xi$. At an informal level, this is suggestive of maximizing a concave quadratic function over a convex domain, which is a convex optimization problem. Continuing the informal discussion, this suggests that the signal effect (which is linear) will always locally dominate and $\xi_n$ will never be forced to zero when the means differ favorably.  

\begin{rem}[Linearity of $\underline{P}_{ij}$]
    The optimization of $\xi_n$ is implicitly single-step, which should not be suboptimal given the temporal independence of evaluation outcomes. The weighting of the hysteresis terms arises from understanding each component in the single-step context. That is, evaluation outcomes are partitioned as ``an observation $(i, j)$ which will be balanced out by an associated $(j, i)$" (hysteresis) versus ``an observation $(i, j)$ which will \emph{not} be balanced out by an associated $(j, i)$" (signal). Of course, the outcomes which will cancel in the future are twice $\underline{P}_{ij}$ (because one can get either the contributing $(i, j)$ outcome \emph{or} the $(j, i)$ outcome), but the fact that \emph{both} outcomes are required to achieve hysteresis means that each individual observation (that is, $(i, j)$ xor $(j, i)$) should be weighted \emph{by one half}. Thus, the appropriate weighting in the right-hand term in \Cref{hysteresis_effect_equation} is precisely $\underline{P}_{ij}$, as opposed to either $\underline{P}_{ij}^2$ (which is not single-step) or $2\underline{P}_{ij}$, which does not take into account the fact that \emph{both outcomes} are needed for hysteresis to occur.  
\end{rem}
\subsubsection{The Optimization Problem}
With the preceding development, we attempt to maximize the log-value of the stochastic process via single-step optimization, given the currently available information. 

Breaking this down: the log-value of $X_n$ is precisely the sum of the log $\Delta_i$, per the definition of the evidence integrator in \Cref{the_methodology}. The available information manifests as 
\begin{equation}
    \hat{P} = \hat{\mathbf{p}}_0\hat{\mathbf{p}}_1^T,
\end{equation}
which represents the empirical distribution of the observed results. Each $\hat{\mathbf{p}}_i$ can be computed by simply calculating the empirical frequency of occurrence of each bin.
Thus, the approximate expected multiplier value (given the current available information) is $\langle A, \hat{P} \rangle$. \Cref{nsm_lemma} guarantees that this expectation is bounded by 1 when the null is true; we are interested, however, precisely in the case where the alternative is true. In that setting, the bound does not apply. 

Due to the linearity of the inner product, the optimization essentially acts elementwise on $\hat{P}$-weighted elements of $\log{A_{ij}}$: 
\begin{equation*}
    \label{eqn:full_optimized_xi}
    \begin{split}
        \xi^* & = \arg \max_{\xi \in [0, 1)} \bigg(\sum_{i=1}^{K-1} \sum_{j=i+1}^K \lvert \Delta \hat{P}_{ij}\rvert \log{\big(1 + \xi \text{ sgn}(\Delta \hat{P}_{ij}) \Delta c_{ij}\big)} \\
        & \hspace{15mm} \ldots + \underline{\hat{P}}_{ij}\log{\big(1 - \xi^2 \Delta c_{ij}^2\big)}\bigg). \\
    \end{split}
\end{equation*}

Applying first-order optimality conditions, we obtain an efficient representation for which a root-finding procedure on $[0, 1]$ quickly converges:
\begin{equation*}
    \label{eqn:FOCs}
    \begin{split}
        0 & = \nabla_\xi (\ldots)\rvert_{\xi^*} \\
        & =  \sum_{i=1}^{K-1} \sum_{j=i+1}^K \frac{\Delta \hat{P}_{ij}\Delta c_{ij}}{1 + \xi^* \text{ sgn}(\Delta \hat{P}_{ij})\Delta c_{ij}} - 2\xi^* \frac{\underline{\hat{P}}_{ij}\Delta c_{ij}^2}{1 - (\xi^*)^2 \Delta c_{ij}^2}.
    \end{split}
\end{equation*}
This can be understood as choosing the optimal $\xi_n$ to maximize the expected growth rate of $X_n$ under the currently available estimate of the distributions $\mathbf{\hat{p}}_i$. This choice explicitly balances the signal and hysteresis effects in order to achieve this maximization. Further, though it is beyond the scope of the present work, it is believed that the nominal objective is in fact concave (via analysis of the second-order shape of the objective). Thus, the current approach is likely sufficient despite only the first-order verification; additionally, faster approaches are likely feasible. Regardless, this proves the necessary result, and describes the practical optimization scheme used for~\OurMethod~in all experiments. 
\section{Additional Exposition of Baseline Methods}
\label{additional_baseline_information}
\subsection{Waudby-Smith Ramdas Confidence Sequences}
The Waudby-Smith-Ramdas (WSR) procedure~\cite{waudby2024estimating} provides time-uniform, non-parametric, and non-asymptotic confidence sequences for mean estimation. Suppose we wish to estimate the mean \(\mu^*\) of a bounded random variable \(Z\) using a sequential stream of observations \(Z_1, Z_2, \ldots\). A confidence sequence is a sequence of confidence intervals $\{CI_t\}_{t=1}^{\infty}$ such that $$\mathbb{P}(\forall t\geq 0,\, \mu^* \in CI_t) \geq 1-\alpha,$$ where \(\alpha\) is a pre-specified error budget and \(CI_t = [l_t, u_t] \subseteq \mathbb{R}\) are intervals. ~\Cref{alg:meanCS} gives an overview of confidence sequence estimation. The WSR procedure typically provides tighter bounds in the same number of samples than other non-parametric concentration methods such as Hoeffding~\cite{hoeffding1963probability} and empirical Bernstein bounds~\cite{maurer2009empirical}. We apply WSR to sequential policy comparison by constructing confidence sequences on the difference in  evaluation scores of the two policies. That is, we estimate the mean of the sequence $Z_n = (r_{1,n} - r_{0,n})$, which denotes the difference in evaluation scores. If the time-uniform sequence ever excludes $0$, then there is strong evidence that the policies are significantly different from each other. Specifically, if the interval is entirely negative, it is evidence for the null; if positive, evidence for the alternative. The test is thus formed by constructing the confidence sequence and stopping precisely at $\arg \min_{t\in \mathbb{N}} \{t: 0 \notin CI_t\}$.

\begin{algorithm}[h]
\caption{Waudby-Smith--Ramdas (WSR) procedure for Confidence Sequences~\cite{waudby2024estimating}}
\label{alg:meanCS}
\begin{algorithmic}[1]
\REQUIRE Data $\{Z_1,\dots,Z_n\}$, error level $\alpha \in (0,1)$, range $[L,U]$ such that $Z_i \in [L,U]$
\ENSURE Confidence sequence $CS$ for the mean
\STATE Confidence Sequence $CS \gets \{\}$
\FOR{$i \gets 1$ \TO $n$}
    \STATE $Z_i \gets (Z_i - L)/(U-L)$ \hfill{\small // Normalize to $[0,1]$}
\ENDFOR
\STATE $c=0.95$ \hfill{\small // Hyperparameter for computing martingales}
\STATE Construct fine grid $M_{\rm grid}$ over $[0,1]$
\STATE Initialize set of candidate means $\mathcal{A} \gets M_{\rm grid}$

\FOR{$t \gets 1$ \TO $n$}
    \STATE $\hat{\mu}_t \gets  (0.5 + \sum_{j=1..t} Z_j)/(t+1)$
    \STATE $\hat{\sigma}_t^2 \gets \big(0.25 + \sum_{j=1}^t (Z_j - \hat{\mu}_t)^2\big)/\big(t+1\big)$
    \STATE $\lambda_t \gets \sqrt{\big(2\log(2/\alpha)\big)\big(t \log(t+1)\,\hat{\sigma}_{t-1}^2\big)}$ {\small // Betting coefficient}
    \FORALL{$m \in M_{\rm grid}$}
        
        \STATE $M_t^+(m) \gets \Bigl(1+\min\bigl(\lambda_t, \tfrac{c}{m}\bigr)(Z_t - m)\Bigr)\, M^+_{t-1}(m)$
        \STATE $M_t^-(m) \gets \Bigl(1-\min\bigl(\lambda_t, \tfrac{c}{1-m}\bigr)(Z_t - m)\Bigr)\, M^-_{t-1}(m)$
        \STATE $M_t(m) \gets \tfrac{1}{2}\max\{M_t^+(m),\, M_t^-(m)\}$ \hfill{\small // Martingale}

        \IF{$M_t(m) \geq 1/\alpha$}
            \STATE $\mathcal{A} \gets \mathcal{A} \setminus \{m\}$ \hfill{\small // Remove $m$ from candidate means}
        \ENDIF
    \ENDFOR
    \STATE $C_\alpha = \{m(U-L)+L : m \in \mathcal{A}\}$ \hfill{\small // True mean lies in this set w.h.p.}
\STATE $CI_t = \bigl[\max\{0,\,\min C_\alpha\},\ \min\{1,\,\max C_\alpha\}\bigr]$
\STATE Append $CI_t$ to $CS$
\ENDFOR
\RETURN $CS$
\end{algorithmic}
\end{algorithm}

\begin{table*}[h]
    \centering
    \resizebox{\linewidth}{!}{%
    \begin{tabular}{lcc|cc|cc|cc|cc|cc}
    \toprule
    \hfill Comparison $\rightarrow$ & \multicolumn{2}{c|}{\textbf{PPO vs. TD3}} & \multicolumn{2}{c|}{\textbf{PPO vs. DDPG}} & \multicolumn{2}{c|}{\textbf{PPO vs. SAC}} & \multicolumn{2}{c|}{\textbf{SAC vs. DDPG}} & \multicolumn{2}{c|}{\textbf{TD3 vs. DDPG}} & \multicolumn{2}{c|}{\textbf{SAC vs. TD3}} \\
    Task $\downarrow$ \hfill Method $\rightarrow$ & WSR &~\OurMethod & WSR &~\OurMethod & WSR &~\OurMethod & WSR &~\OurMethod & WSR &~\OurMethod & WSR &~\OurMethod \\
    \midrule
    Ant-v4                  & 69 & 60 & 27 & 21 & 13 & 12 & 10 & 9 & 21 & 19 & 14 & 13 \\
    HalfCheetah-v4          & 10 & 9 & 8 & 8 & 8 & 7 & 40 & 38 & 26 & 23 & 18 & 15 \\
    Hopper-v4               & 142 & 79 & 13 & 12 & 46 & 41 & 17 & 14 & 11 & 10 & 30 & 20 \\
    InvertedPendulum-v4     & 39 & 39 & 677 & 267 & 39 & 39 & 46 & 46 & 46 & 46 & -- & -- \\
    Humanoid-v4             & 8 & 7 & 42 & 40 & 8 & 7 & 8 & 8 & 8 & 8 & 96 & 89 \\
    Walker2d-v4             & 25 & 25 & 23 & 22 & 12 & 11 & 8 & 7 & 11 & 10 & 24 & 22 \\
    Pusher-v4               & 70 & 61 & 89 & 77 & 56 & 47 & 133 & 109 & 294 & 237 & 249 & 220 \\
    \midrule
    Total (14000 nominal)   & 726 & \textbf{560} & 1758 & \textbf{894} & 364 & \textbf{328} & 524 & \textbf{462} & 834 & \textbf{706} & 2862 & \textbf{2758} \\
    \bottomrule
    \end{tabular}
    }
    \caption{\textbf{Empirical time-to-decision for all \textit{reinforcement learning} pairwise policy comparisons on Mujoco benchmark tasks using WSR and~\OurMethod.} If a decision is not reached, the entry is left blank; for the purpose of computing evaluation savings, any blank entry is counted at $N$ trials. All simulated tasks utilize $N = 1000$. Thus, as there are two algorithms being compared pairwise, the total number of batch trials for each column is $14$k. Due to not being optimized for policy comparison, WSR requires more trials to reach a decision, uniformly across tasks and learning algorithms. Though performance is similar in the low-variance regime (small TTD, or `easy problems'), there are multiple instances where significant variation in time-to-decision arises due to this suboptimal multiplier selection, most notably with \textit{PPO vs DDPG} on InvertedPendulum-v4. }
    \label{table_of_rl_data_all}
\end{table*}
\section{Additional Experimental Results}
\subsection{Real Valued Continuous Metrics (RL Baselines)}
The following experiments demonstrate that we can apply~\OurMethod~to compare rich behavioral properties of robot policies quantified by continuous metrics such as jitter, smoothness, and stability, that are not easily represented in discrete, partial scoring rubrics. 
We use~\OurMethod~to differentiate between RL policies trained via different canonical algorithms on Mujoco benchmark tasks. Variation in reward structure and order of magnitude is significant across tasks, and the rewards themselves are complex, often continuous, and span multiple orders of magnitude at convergence to benchmark performance (see \Cref{tab:mujoco_rewards}).\footnote{For additional details on the structure, see, e.g., the implementation of \cite{huang2022cleanrl}.} Due to this complexity, the WSR procedure is the only applicable baseline, as the distribution over evaluation rewards is nonparametric. 
\begin{figure}
    \centering
    \includegraphics[width=\linewidth]{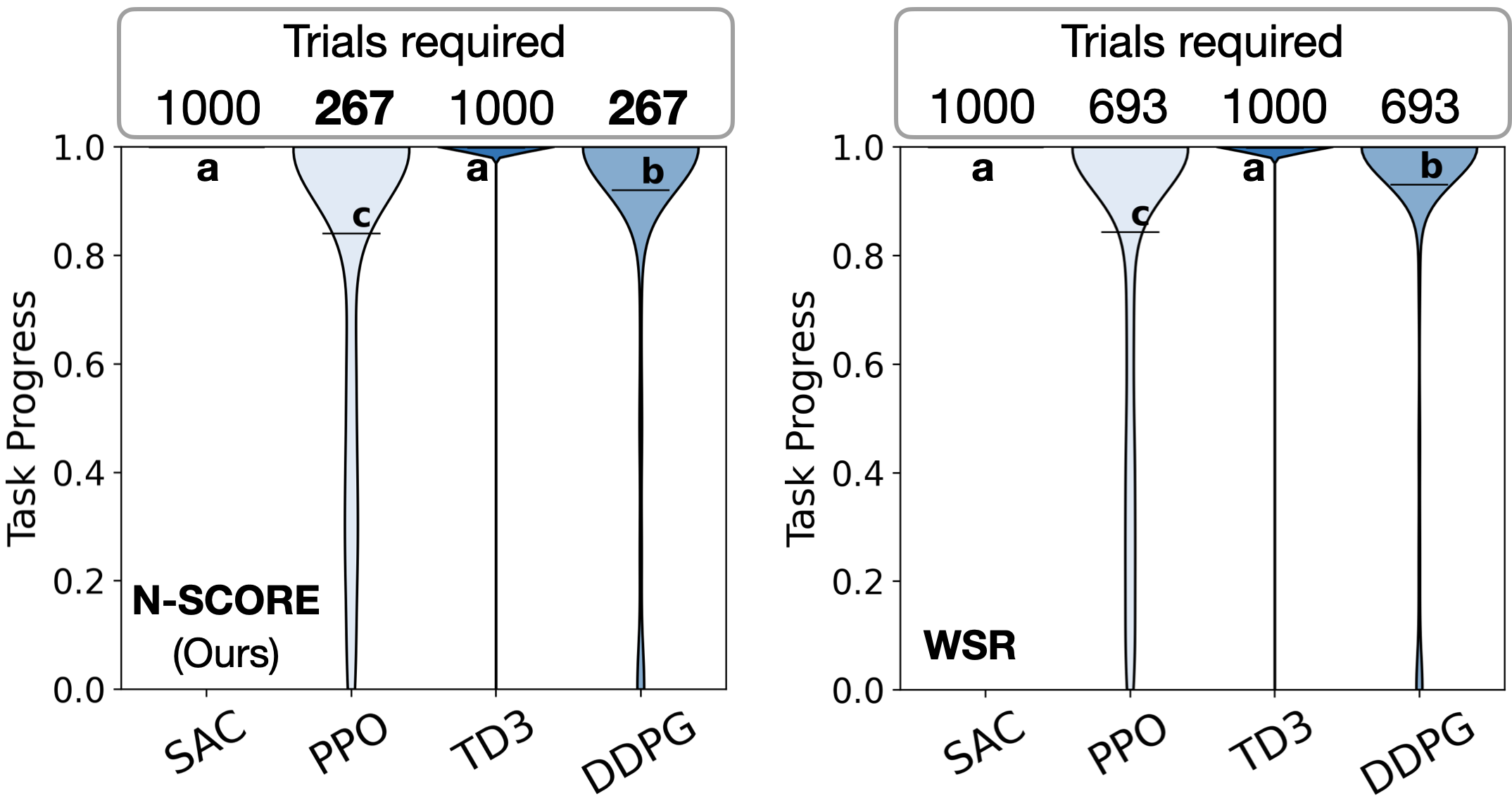}
    \caption{\textbf{Number of trials required to separate RL policies on InvertedPendulum-v4.} This is one of the most stark instances of disparity between~\OurMethod~and WSR, the latter of which requires significantly more evaluation effort (by nearly a factor of three) to reach a decision. }
    \label{fig:inv_pend}
\end{figure}
We train RL policies on Mujoco benchmarks using the CleanRL library~\cite{huang2022cleanrl}. Policies are trained for $10^6$ global steps. Mean policy rewards are averaged over $1000$ independent episodes using a frozen policy post-training.~\Cref{tab:mujoco_rewards} lists the average episodic return of the various policies, which are comparable to SOTA training curves listed in Spinning Up RL baselines~\cite{SpinningUp2018}.~\Cref{fig:2x3} shows the the trials required for each benchmark to separate all RL policies in a multi-policy comparison setting where the global error tolerance \(\alpha=0.05\) is split between six pairwise comparisons.~\Cref{fig:inv_pend} illustrates the time to separation for InvertedPendulum-v4. In this case, both~\OurMethod~and WSR cannot distinguish between SAC and TD3, which have an average empirical performance of 1000 and 998, respectively. Both policies attain the maximum possible reward for this benchmark, making it statistically impossible to distinguish in a 1000 trials.~\Cref{table_of_rl_data_all} lists the time-to-decision on pairwise comparison of policies, showing significant savings of~\OurMethod~ over WSR in the number of trials required, most notably saving over 800 trials in the comparison of DDPG and PPO. In total, from a nominal simulation burden of 28k batch evaluation rollouts,~\OurMethod~ requires approximately 5600 evaluations to make all necessary comparisons (savings of over 80\%), while WSR requires approximately 7100 evaluations. These each reflect strong savings over batch methods, while~\OurMethod~displays an additional 20\% improvement in total sample complexity over the entire RL evaluation process. 
\begin{figure*}
    \centering
    \begin{subfigure}{0.48\linewidth}
        \centering
        \includegraphics[width=\linewidth]{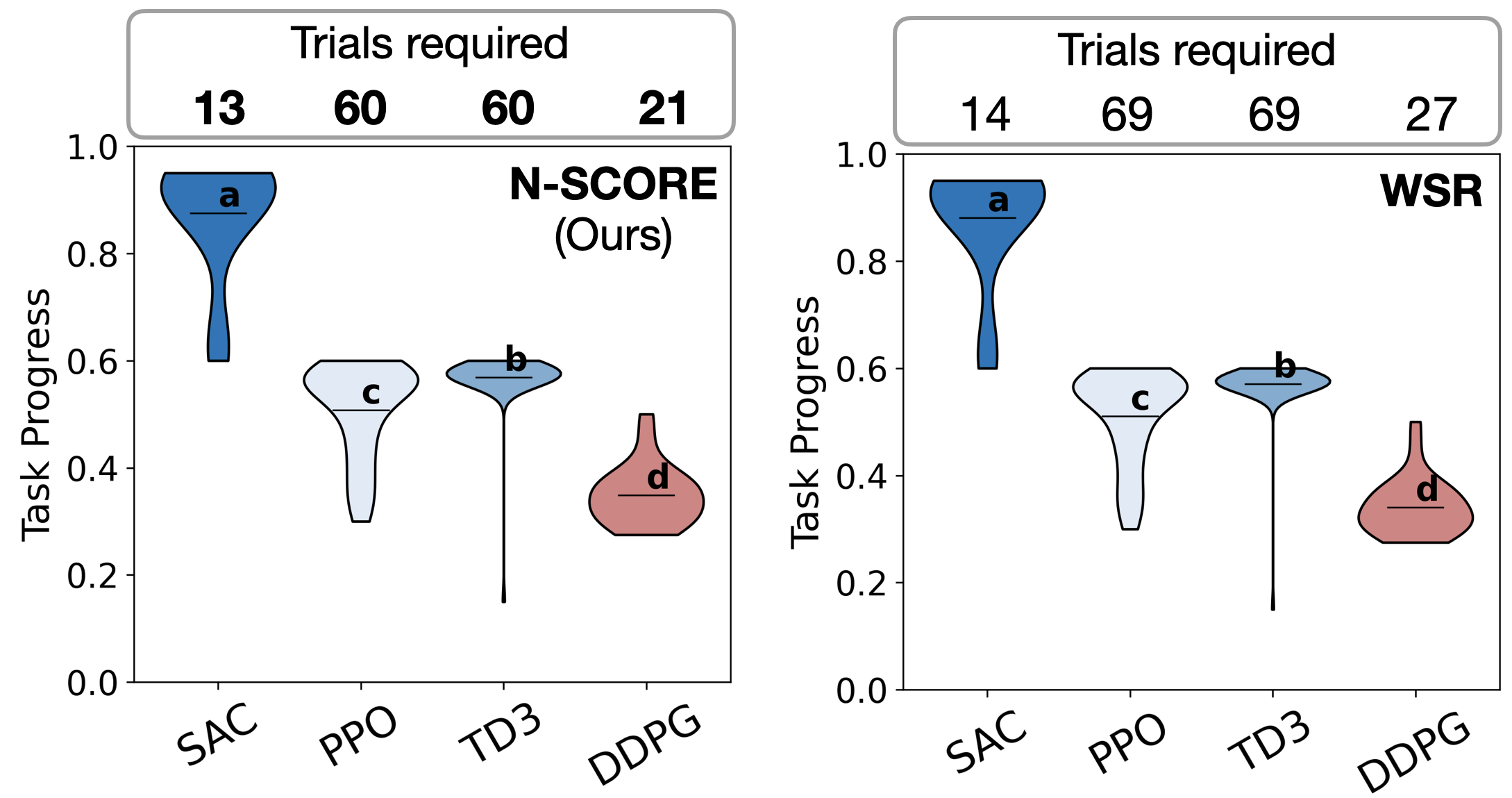}
        \caption{Ant}
    \end{subfigure}\hfill
    \begin{subfigure}{0.49\linewidth}
        \centering
        \includegraphics[width=\linewidth]{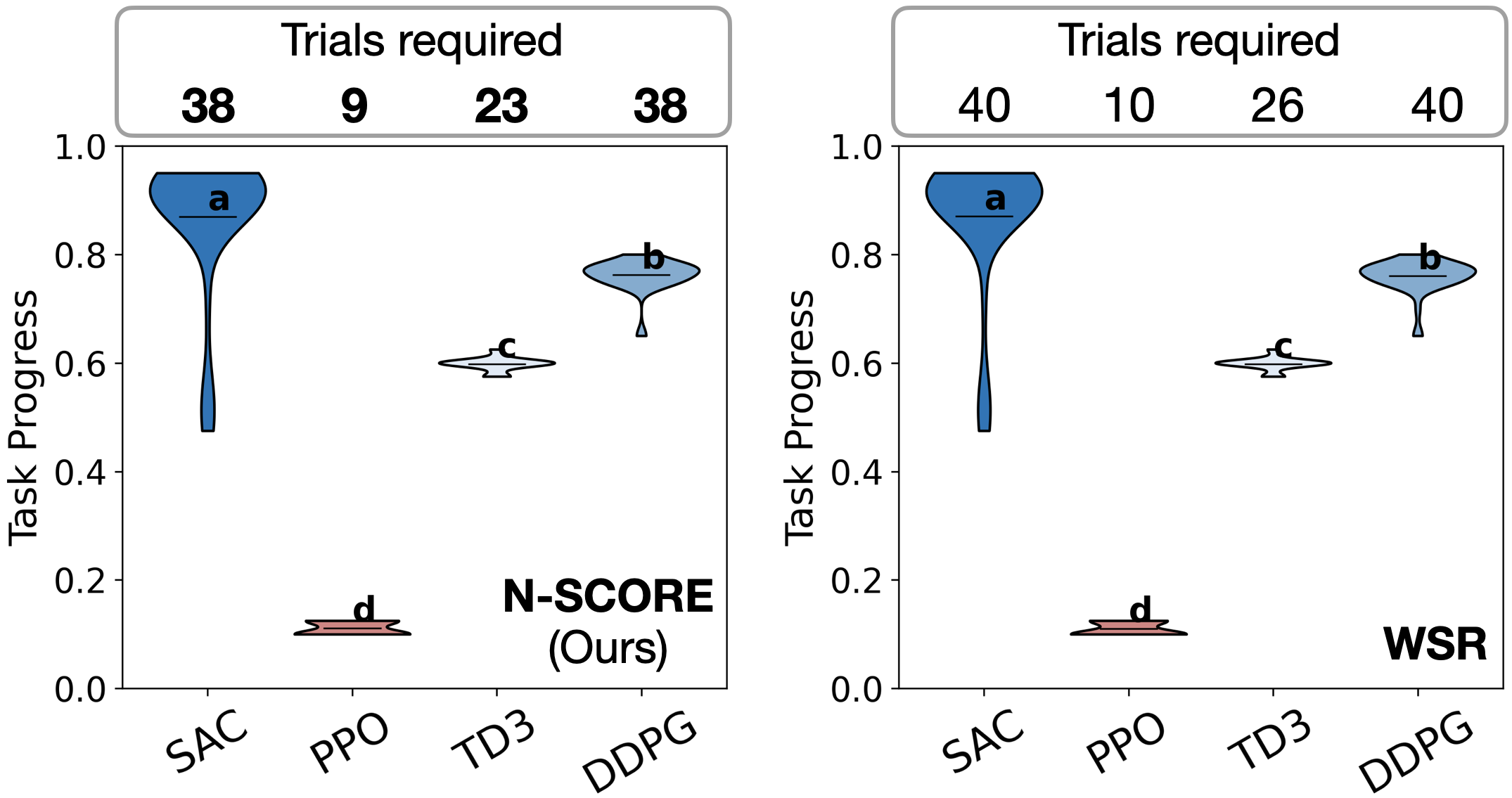}
        \caption{Half Cheetah}
    \end{subfigure}
    \vspace{1em}
    
    \begin{subfigure}{0.48\linewidth}
        \centering
        \includegraphics[width=\linewidth]{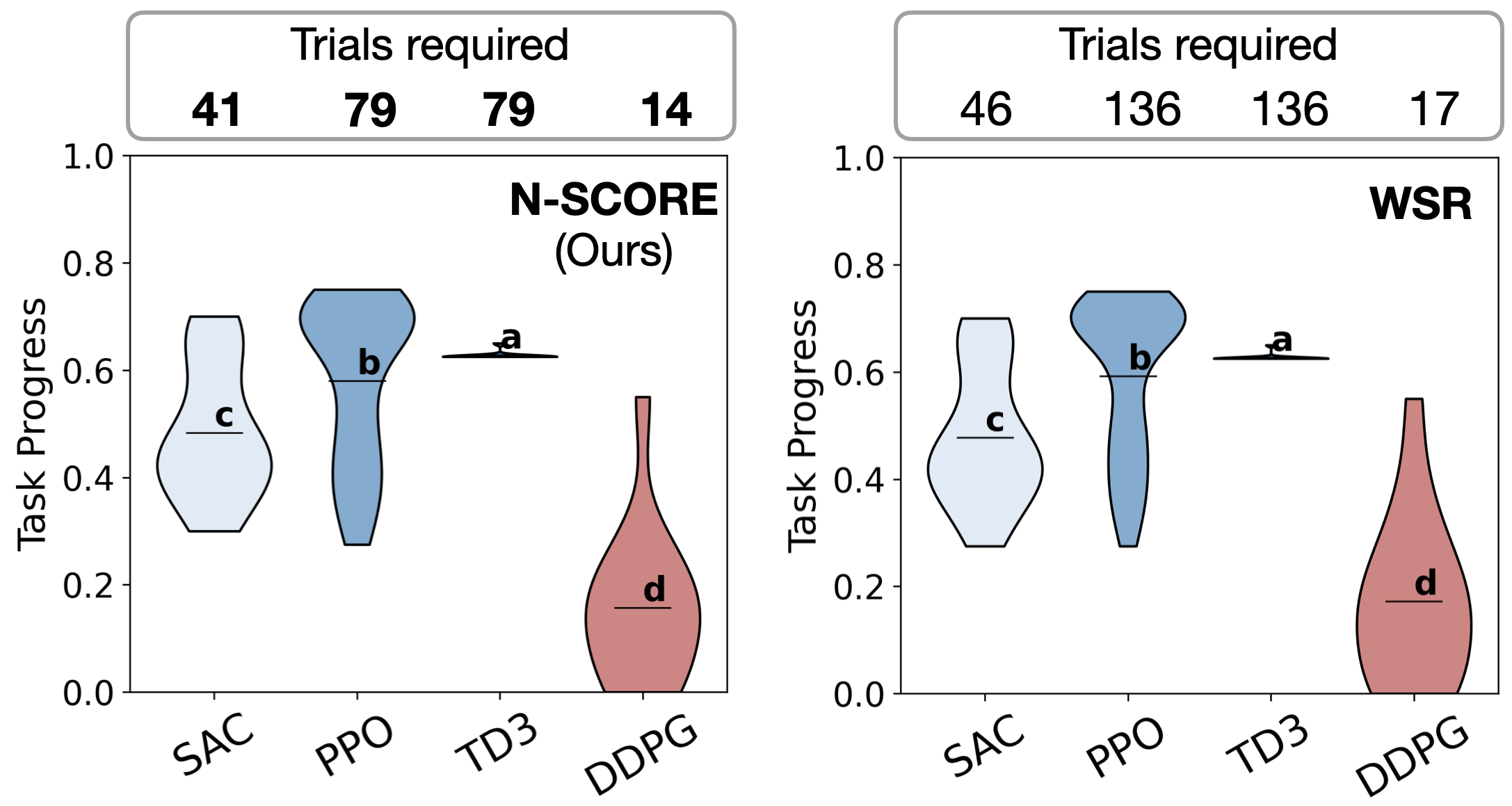}
        \caption{Hopper}
    \end{subfigure}
    \hfill
    \begin{subfigure}{0.48\linewidth}
        \centering
        \includegraphics[width=\linewidth]{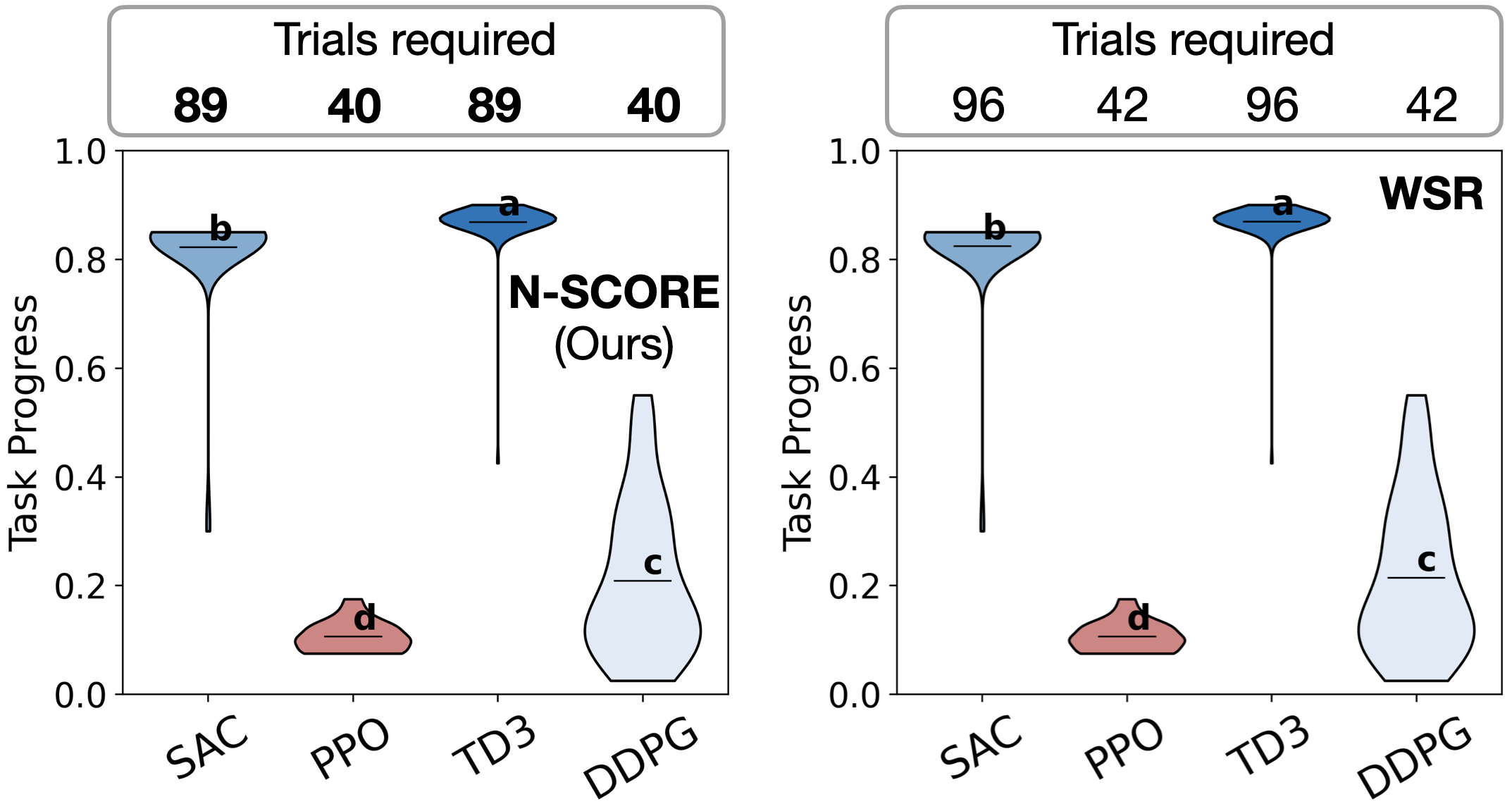}
        \caption{Humanoid}
    \end{subfigure}
    \vspace{1em}
    \begin{subfigure}{0.48\linewidth}
        \centering
        \includegraphics[width=\linewidth]{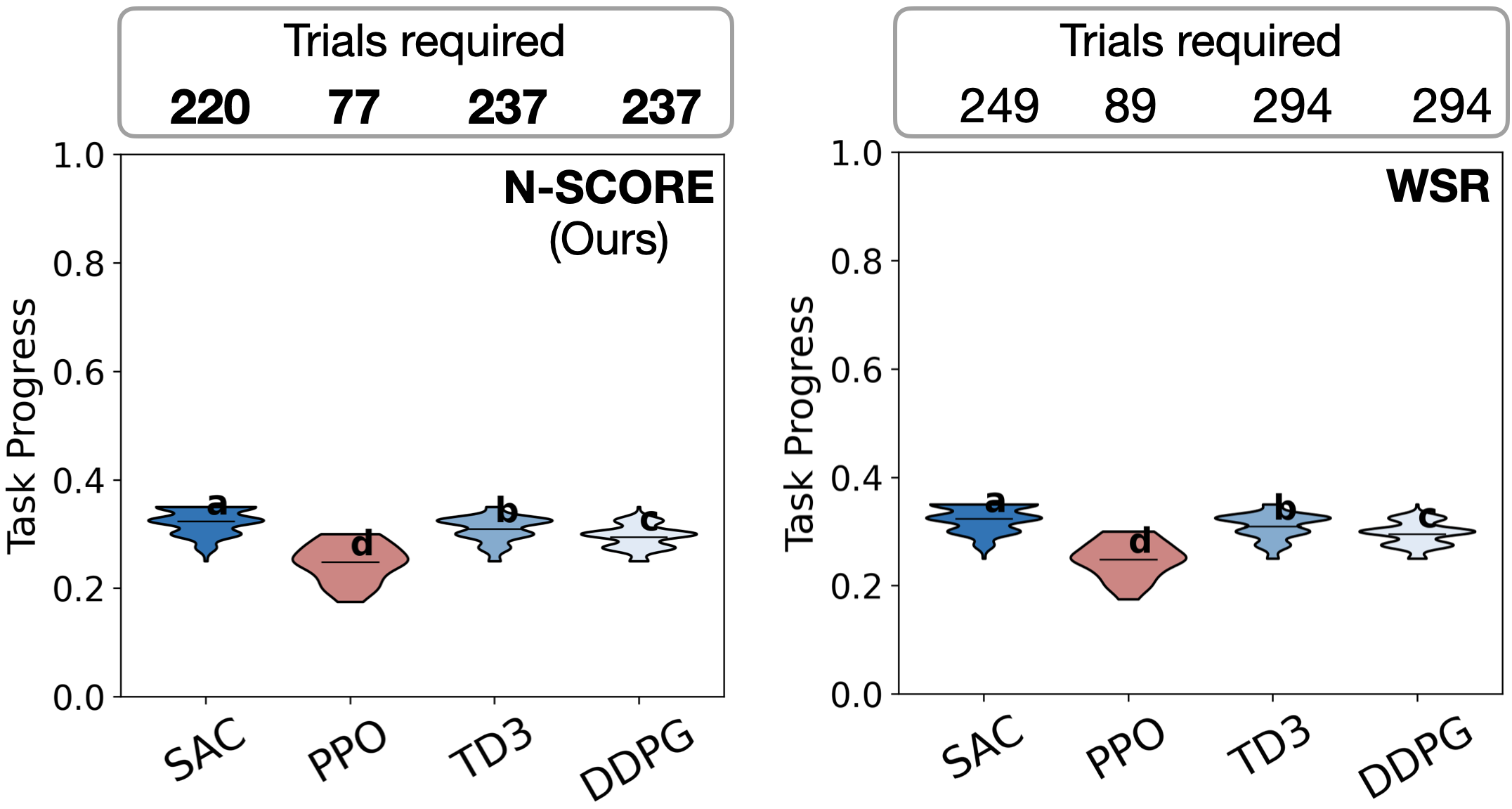}
        \caption{Pusher}
    \end{subfigure}\hfill
    \begin{subfigure}{0.48\linewidth}
        \centering
        \includegraphics[width=\linewidth]{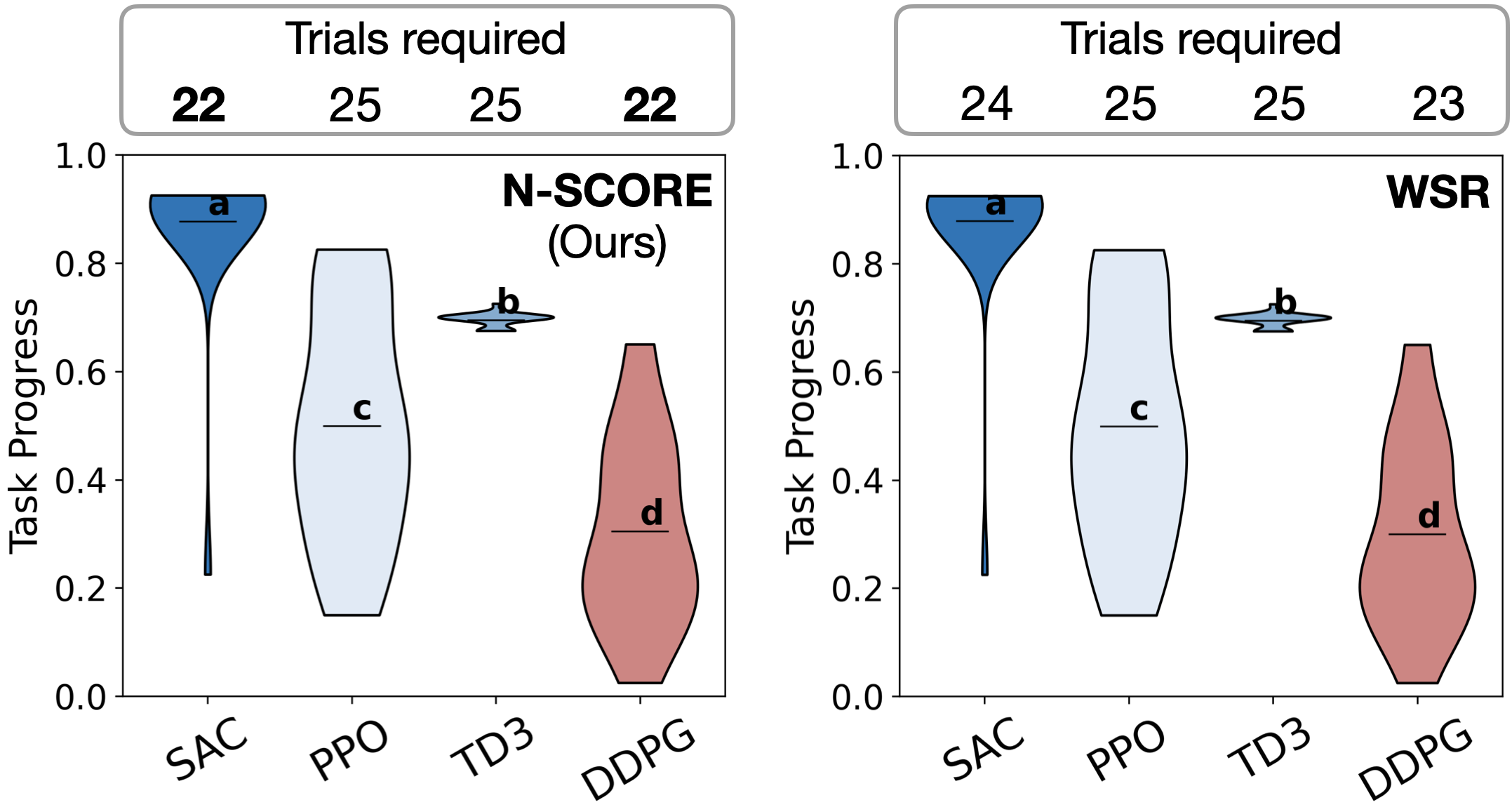}
        \caption{Walker2D}
    \end{subfigure}

    \caption{\textbf{Violin plots and the number of samples required for~\OurMethod~ and WSR on multi-policy comparison of RL policies on Mujoco benchmarks.} Policies with different letters are statistically distinguishable by the method. Policies are compared at a global error bound of $\alpha = 0.05$ with a Bonferroni correction. In all cases,~\OurMethod~results in the same comparison conclusions as WSR with fewer samples, demonstrating its broadly improved efficiency. These results also serve as an alternate visualization of the time-to-decision results in \Cref{table_of_rl_data_all}.}
    \label{fig:2x3}
\end{figure*}

\begin{table}[!htbp]
    \centering
    \resizebox{\columnwidth}{!}{
\begin{tabular}{lrrrr}
\toprule
Task & DDPG & TD3 & PPO & SAC \\
\midrule
Ant-v4 & 418.8 & 2477.5 & 1849.0 & 4901.8 \\
HalfCheetah-v4 & 9932.2 & 7782.5 & 1332.2 & 11759.0 \\
Hopper-v4 & 1106.4 & 3234.6 & 2797.4 & 2488.7 \\
InvertedPendulum-v4 & 927.2 & 998.6 & 848.2 & 1000.0 \\
Pusher-v4 & -38.5 & -35.7 & -47.6 & -32.4 \\
Walker2d-v4 & 1732.5 & 3513.4 & 1679.8 & 4544.1 \\
Humanoid-v4 & 1386.0 & 5288.2 & 748.4 & 5042.7 \\
\bottomrule
\end{tabular}
}
\caption{\textbf{Empirical mean episodic return on evaluation instances for Mujoco benchmark tasks.} Note, in tandem with \Cref{fig:2x3}, that the distributions over rewards near optimality vary significantly in shape and scale. This emphasizes the generality of nonparametric procedures and their broad applicability to creative or nonstandard metrics of robot performance or behavior.}
\label{tab:mujoco_rewards}
\end{table}

\subsection{Binary vs. Continuous Valued Metrics}
\begin{figure*}
    \centering
    \includegraphics[width=\linewidth]{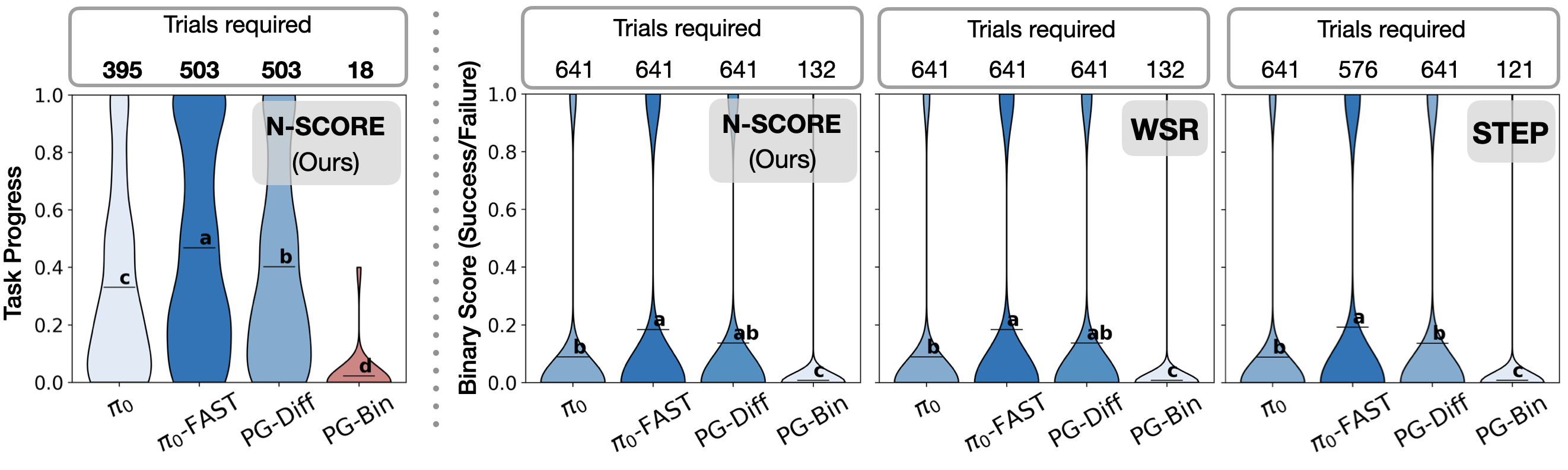}
    \caption{\textbf{Continuous task progress scores enable faster policy comparison on RoboArena policy evaluation data}. Violin plots and time-to-decision for multi-policy comparison from RoboArena evaluations. \emph{Left:} Policy comparison under continuous scores with~\OurMethod. \emph{Right:} policy comparison under Bernoulli scores with~\OurMethod, WSR, and STEP. In the Bernoulli comparison setting, none of the methods are able to distinguish all the policies. Even though STEP is maximally efficient on Bernoulli comparisons,~\OurMethod~on continuous progress requires fewer total trials to distinguish policies. We emphasize that the partial credit and binary metrics arise \emph{from the same rollouts} of each policy; the partial credit thus reflects precisely a more informative `representation' of the rollout for evaluation purposes. The reduced time-to-decision and increased power (separating all of the policies successfully) highlights the fundamental advantage of fine-grained task progress scores over sparse binary success rates for efficient policy comparison. As can be observed in \Cref{fig:roboarena} in the main text,~\OurMethod~also significantly reduces the time-to-decision with respect to WSR. The former requires approximately 1420 trials, while the latter needs an additional 450, while failing to distinguish \textbf{$\pi_0$} from \textit{PG-Diff}. }
    \label{fig:roboarena_binary}
\end{figure*}
We add further validation to the empirical observation of \Cref{the_experiments_section} -- that partial credit evaluation tends to improve average time-to-decision on practical evaluation problems -- by comparing the times-to-decision of STEP \cite{snyder2025your} on binary success data from \cite{atreya2025roboarena} with times-to-decision of~\OurMethod~on equivalent partial credit data taken from the same evaluations. We emphasize that the evaluation burden of using partial credit measures is essentially identical to success measures, so the added informativity (and reduction in necessary evaluation trials), can be realized for free. 

To illustrate this, we rerun multi-policy comparisons in the setting of \cite{atreya2025roboarena}, but using instead the associated binary evaluation metrics. The resulting times-to-decision are shown in \Cref{fig:roboarena_binary}. STEP is maximally sample efficient on Bernoulli scores at 1979 total evaluation trials, but fails to separate two of the policies. In contrast, the total sample complexity for~\OurMethod~on continuous progress scores is 1419, a savings of 560 evaluation trials. Furthermore,~\OurMethod~achieves these savings with higher empirical power, correctly separating every policy (i.e., giving a statistically significant \emph{complete ordering} of the performance of the policies). 

To further illustrate the time-to-decision of a test procedure, we present the anytime-valid p-values against the number of trials, in both continuous and binary metrics. While both WSR and~\OurMethod~are sequential procedures that carry a notion of an anytime-valid p-value, the STEP procedure does not; when STEP rejects the null hypothesis, it is due to the p-value dropping below the error threshold for the comparison.~\Cref{fig:pvalue_binary_vs_real,fig:pvalue} plot the anytime-valid p-value against the number of trials in the pairwise policy comparisons from RoboArena data under both continuous progress and binary task success settings. Due to multi-test correction, the total error budget of \(\alpha=0.05\) is split between the six pairwise comparisons, resulting in an allocation of \(\alpha=0.0083\) each. A test procedure terminates when the p-value is at or below the error threshold. In~\Cref{fig:pvalue_binary_vs_real}, we illustrate time-to-decision under large gaps in the empirical performance of policies. Here, we compare policies to PG-Binning-DROID policy which has the lowest empirical performance in RoboArena, up to 30\(\%\) points lower than other policies on continuous task progress. For both NSM and~\OurMethod, the p-value quickly approaches the error threshold in 18 trials, consistent with the reported time-to-decision illustrated in~\Cref{fig:roboarena}. However, in the Bernoulli score setting, the empirical performance gap shrinks (as shown in~\Cref{fig:roboarena_binary}), making it harder to distinguish and requiring more trials than the continuous progress setting. Furthermore, even when the Bernoulli performance gap is substantial, such as in the case of \(\pi_0\)-FAST-DROID vs. PG-Bin-DROID which differ by around \(20\%\) points, the p-values of the tests require around 80 trials to gather sufficient statistical evidence of distinction (see~\Cref{fig:sub_gap}). Intuitively, for the same performance gap, the signal-to-noise ratio is relatively higher in continuous progress settings because the empirical variance of continuous scores is no greater than the empirical variance of the corresponding Bernoulli scores. These results illustrate that ~\OurMethod~and WSR are able to leverage the rich information captured in continuous evaluation scores to distinguish policies faster than Bernoulli scores. 

In~\Cref{fig:pvalue}, we plot the progression of p-values for comparisons where the performance gaps between policies are small (around \(10\%\) or lower). ~\OurMethod~in the continuous task progress setting is the fastest in distinguishing the policies and WSR in the binary metric setting is the slowest. In the comparison shown in~\Cref{fig:pi0_vs_pi0fast}, ~\OurMethod~even in the binary metric setting requires fewer trials to distinguish than WSR with continuous scores. In the comparisons shown in~\Cref{fig:pi0_vs_pgdiff,fig:pi0fast_vs_pgdiff}, while both ~\OurMethod~and WSR with binary metrics fail to distinguish policies within 641 trials, the p-value of~\OurMethod~is much closer to the \(\alpha\)-threshold than WSR. For a valid multi-policy comparison, we split the error budget and apply a Bonferroni correction to control for type-I error across all pairwise comparisons at the cost of reduced statistical power. In future work, we plan to investigate efficient multi-test correction methods to proportionally allocate the risk budget. This would be complement our current work, enabling further efficiency gains.

Finally, we make an additional observation on the trends of anytime-valid p-values of WSR and~\OurMethod. As seen in~\Cref{fig:pvalue,fig:pvalue_binary_vs_real}, WSR p-values under either metric can show a steeper decrease initially, but p-values of~\OurMethod~ ultimately approaches the risk threshold first. We attribute this efficiency to our optimization of the betting coefficient \(\xi_n\) for policy comparison. In contrast, the betting coefficients for WSR (line 11 of~\Cref{alg:meanCS}) are not optimized for policy comparison. 

\begin{figure*}[h]
    \centering
    \begin{subfigure}{0.33\linewidth}
        \centering
        \includegraphics[width=\linewidth]{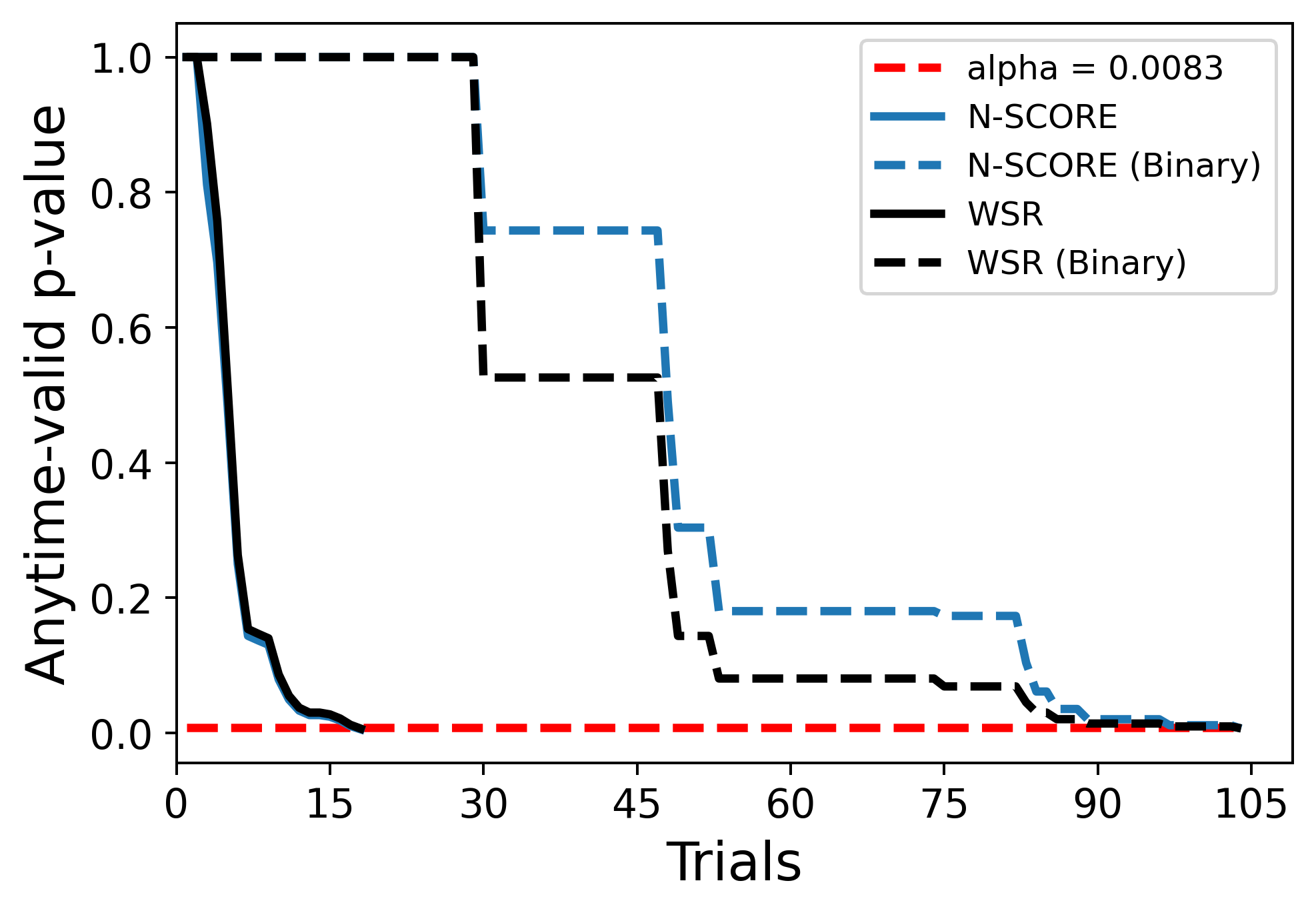}
        \caption{PG-Diff-DROID vs. PG-Bin-DROID}
    \end{subfigure}\hfill
    \begin{subfigure}{0.33\linewidth}
        \centering
        \includegraphics[width=\linewidth]{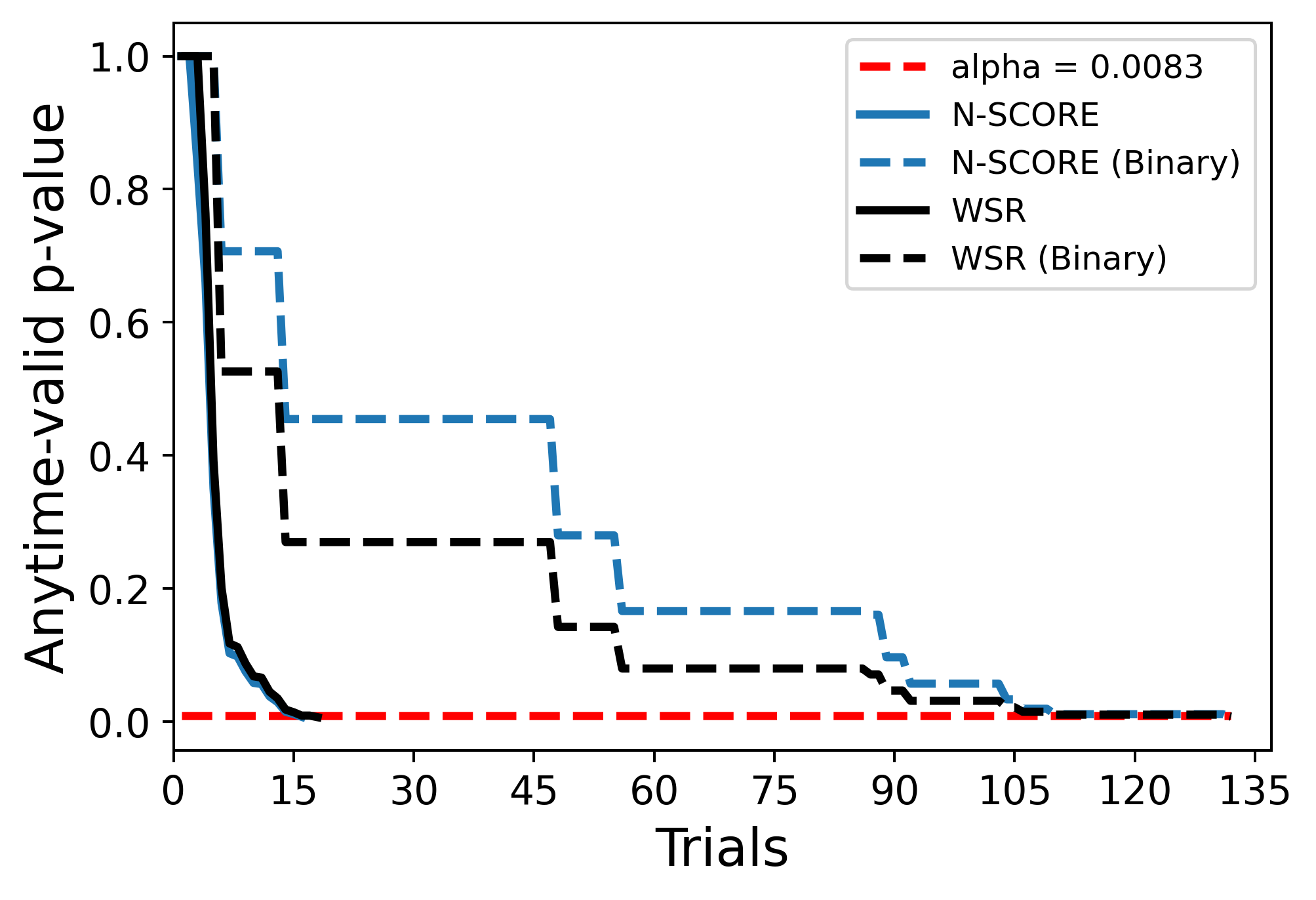}
        \caption{\(\pi_0\)-DROID vs. PG-Bin-DROID}
    \end{subfigure} \hfill 
    \begin{subfigure}{0.33\linewidth}
        \centering
        \includegraphics[width=\linewidth]{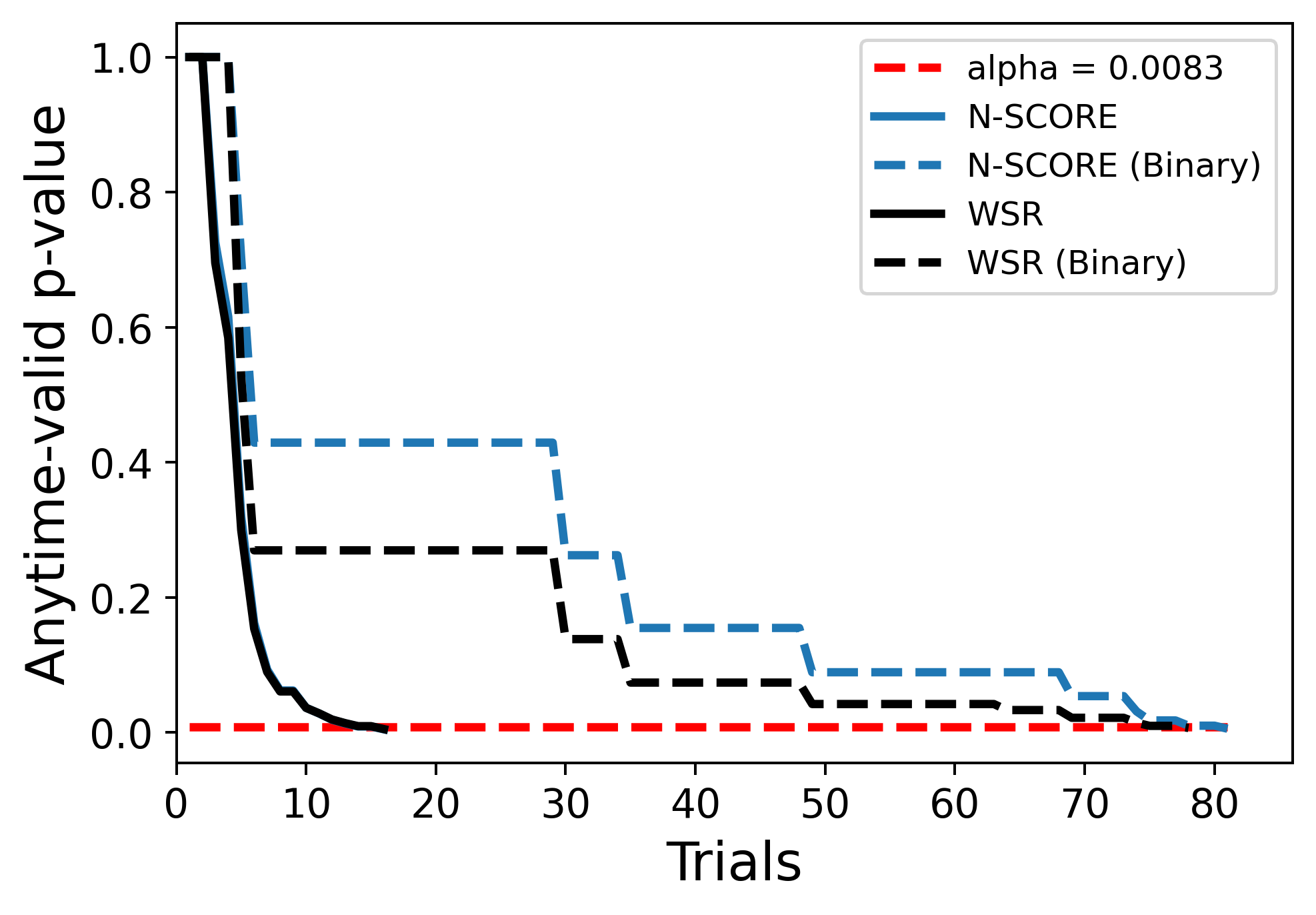}
        \caption{\(\pi_0\)-FAST-DROID vs. PG-Bin-DROID}
        \label{fig:sub_gap}
    \end{subfigure} 
    \caption{\textbf{Anytime-valid p-values vs. number of trials on multi-policy comparisons on RoboArena. A test procedure terminates when its p-value is at or below the error threshold \(\alpha\). As can be seen, it is faster to distinguish policies under task progress metrics as opposed to binary metrics when the policy performance gap is large.} PG-Binning-DROID has an empirical mean that is more than 30\(\%\) points lower than \(\pi_0\)-DROID, \(\pi_0\)-FAST-DROID, and PG-Diff-DROID.}
    \label{fig:pvalue_binary_vs_real}
\end{figure*}

\begin{figure*}[h]
    \begin{subfigure}{0.33\linewidth}
        \centering
        \includegraphics[width=\linewidth]{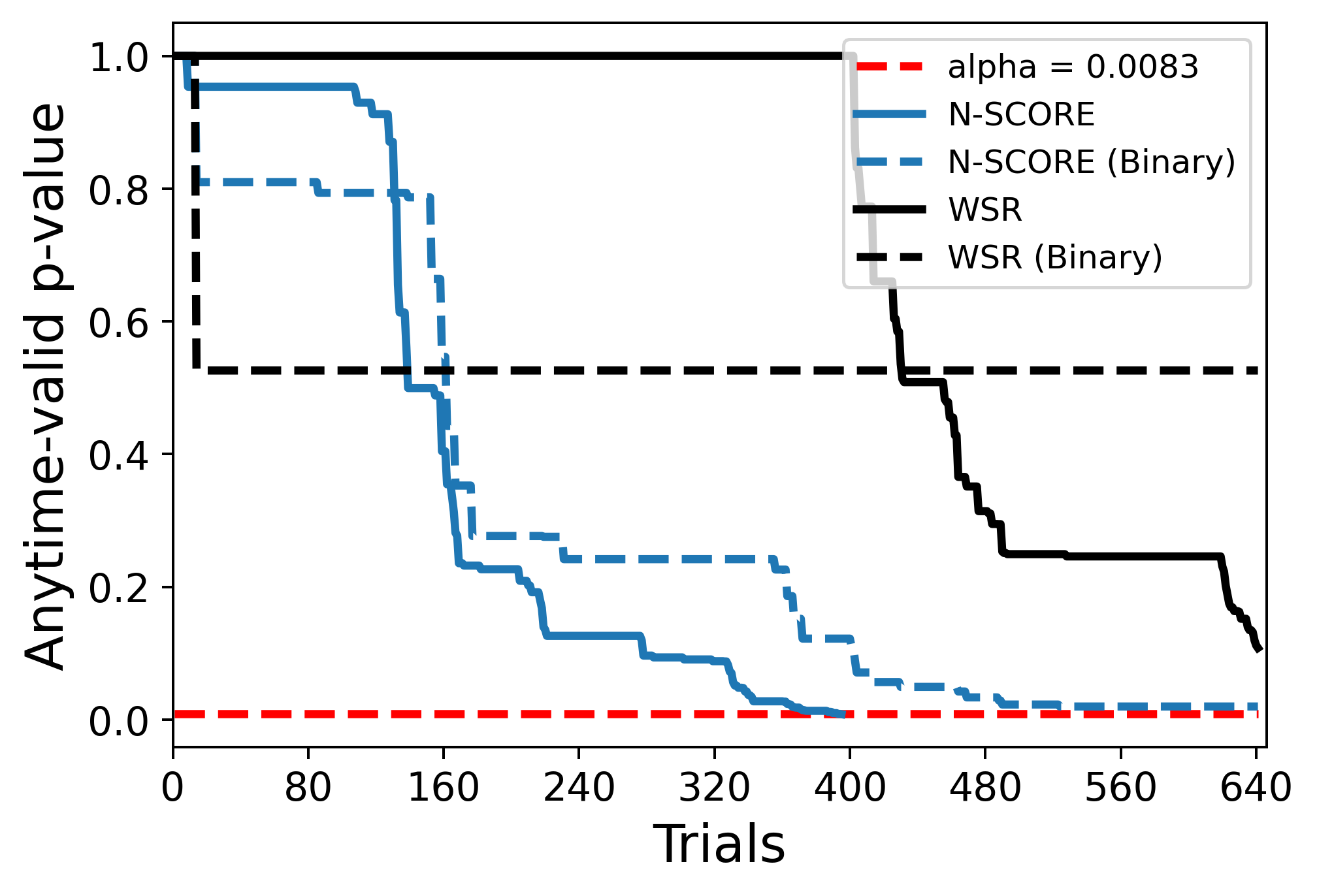}
        \caption{\(\pi_0\)-DROID vs. PG-Diff-DROID}
        \label{fig:pi0_vs_pgdiff}
    \end{subfigure} \hfill
    \begin{subfigure}{0.33\linewidth}
        \centering
        \includegraphics[width=\linewidth]{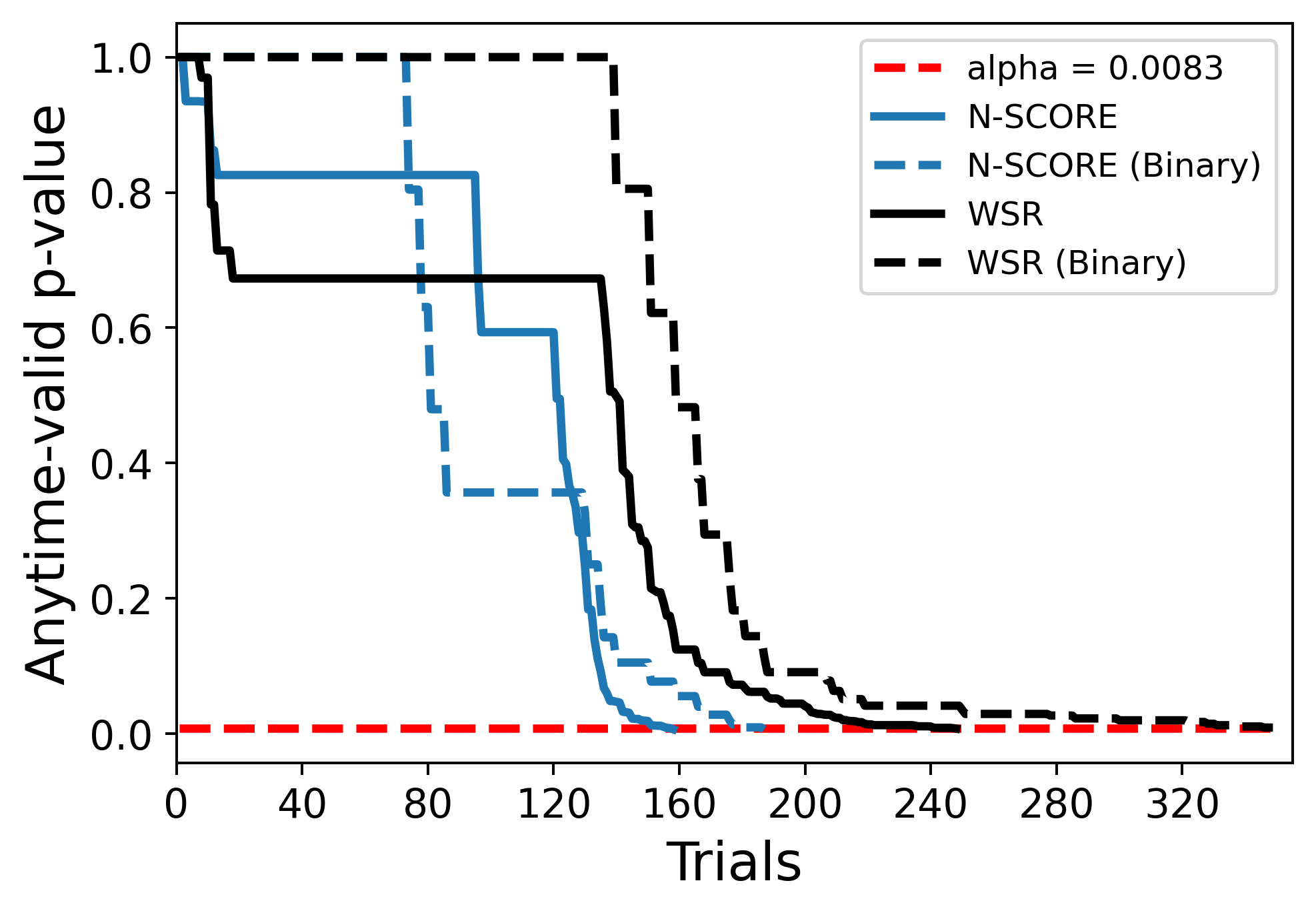}
        \caption{\(\pi_0\)-DROID vs. \(\pi_0\)-FAST-DROID}
        \label{fig:pi0_vs_pi0fast}
    \end{subfigure}\hfill 
    \begin{subfigure}{0.33\linewidth}
        \centering
        \includegraphics[width=\linewidth]{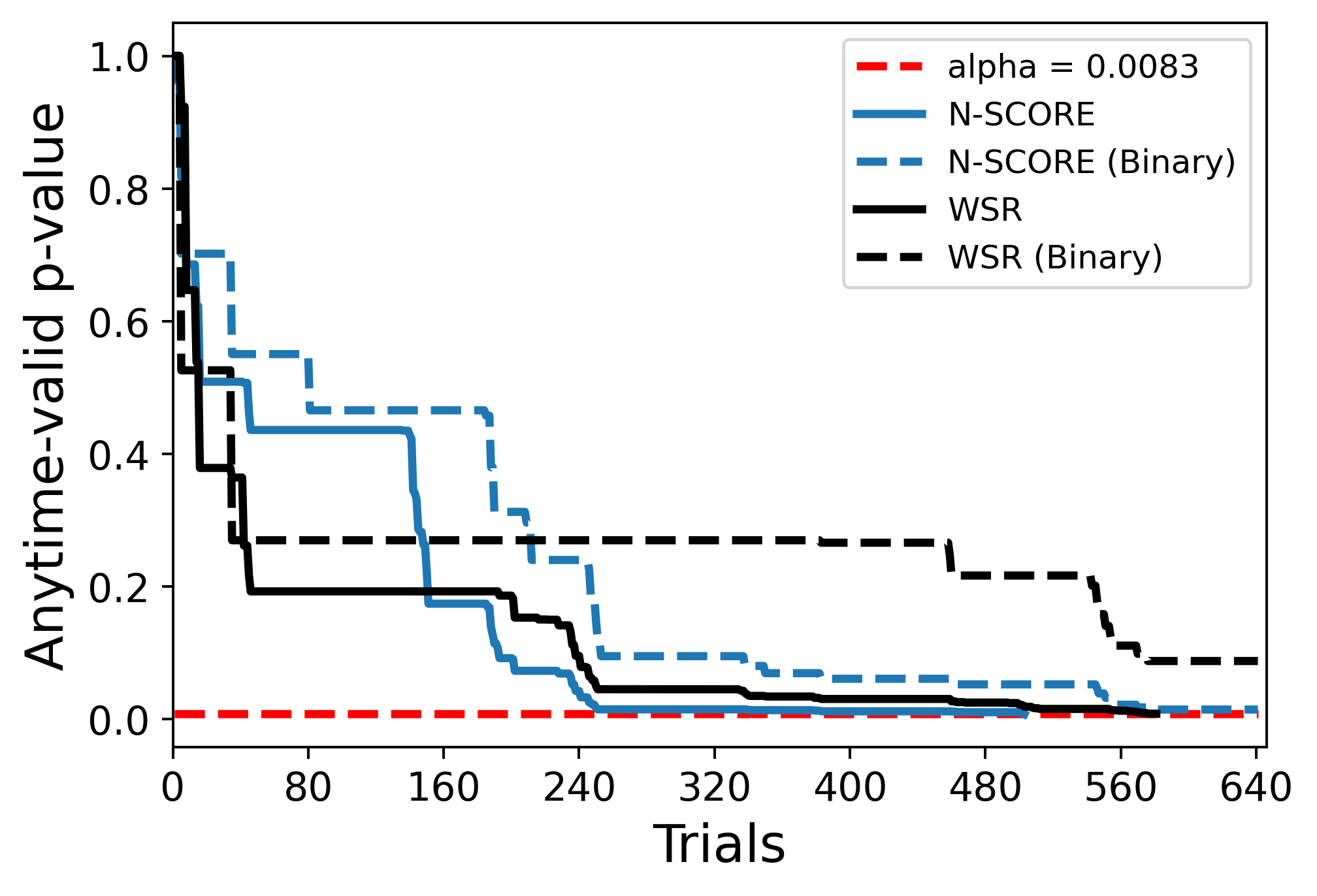}
        \caption{\(\pi_0\)-FAST-DROID vs. PG-Diff-DROID}
        \label{fig:pi0fast_vs_pgdiff}
    \end{subfigure}
\caption{\textbf{Anytime p-values of test procedures on multi-policy comparisons on RoboArena data with small gaps in empirical performance of the policies.} A test procedure terminates when its p-value is at or below the error threshold \(\alpha\). In this regime, binary metrics often fail to obtain a significant result at all, whereas methods that exploit more informative measures can stop significantly more quickly. }
\label{fig:pvalue}
\end{figure*}
\subsection{Data Independence in Robot Policy Evaluation}
Robot policy evaluations can be conducted in several ways, differing in how evaluation environments are sampled due to logistical feasibility. We clarify these distinctions for a better understanding of statistical assumptions and interpretation of evaluation results. In all of the following evaluation settings, we wish to estimate policy performance over some environment distribution \(\mathcal{D}_{\text{env}}\).

In the first setting, suppose the evaluator wishes to compare policies on a specific environment (i.e., a single initial condition), which is useful in discerning whether one policy succeeds while the other one fails more consistently in that particular setting. This corresponds to a degenerate environment distribution, in which we evaluate both policies on the same environment all the time. Despite the deterministic environment selection in this context (arising from the distribution degeneracy), our method remains valid because identical environments amount to i.i.d. samples in this special case.

In the second setting, the evaluator samples an environment i.i.d., and runs a single trial of every policy on that same environment before moving on to sample a new initial condition. This is useful in collecting pairwise preferences and is logistically easier for the evaluator to reset to the same environment for all policies\footnote{This is of course only theoretical: resets are inherently imprecise due to small errors in resetting the environment, small perturbations in camera poses, etc. This is discussed in more detail in, for example, \cite{kress2024robot}.}. This is the evaluation scheme adopted in RoboArena~\cite{atreya2025roboarena}.

In the third setting, the evaluator samples an environment i.i.d. with replacement for each trial and for each policy. For rich environment distributions, it is important to gather independent data from each trial for statistical inference. Specifically, this implies that the environment at  trial \(n\) for policy A might be different from that of trial \(n\) of policy B. While this ensures i.i.d. sampling, the frequencies of various initial conditions might not be equal across policies. 

A separate but important scheme is one that is exchangeable but not i.i.d. Often, this arises from stratification over an auxiliary variable -- for example, forcing the $N$ evaluations to be split equally so as to have exactly $N/L$ evaluations in each of $L$ task contexts. Stratification often induces implicit sampling without replacement to ensure that all policies see all $L$ `types' of initial conditions at exactly the same frequency, rather than the softer i.i.d. constraint of them having the same \emph{expected} frequency, but allowing the frequency to vary in finite samples.\footnote{RoboArena is again a special case, taking $L = N$.} Very concretely: if 100 evaluations are to consider a distribution that perturbs a target object or the background lighting color, stratification forces 50 evaluations \emph{exactly} for changes in target object and 50 evaluations \emph{exactly} for changes in background lighting. For truly i.i.d. data, this context itself is random, and we might observe, say, 55 of one and 45 of the other in a given evaluation sample. This exchangeable-but-not-i.i.d. scheme is adopted in~\citet{barreiros2025careful}. While slightly weaker than i.i.d., any potential effects due to the deviation from the i.i.d. assumption are practically negligible as long as all the policies are evaluated on each `type' of environment a sufficient number of times (i.e., when $L$ is small relative to $N$).

The importance of independence, and the challenge of $L \approx N$ with respect to the second setting, is clarified as follows. If there are latent or hidden correlations between different policies' performance levels with respect to the draw of the particular environment, then the variance of the difference in observed performance will be inflated\footnote{This can be intuitively seen if we imagine that the policies are negatively correlated, using the simple identity $\text{Var}(R_1 - R_0) = \text{Var}(R_1) + \text{Var}(R_0) - 2\text{Cov}(R_1, R_0)$. If the covariance is negative, the variance is amplified. If environments are chosen instead to be i.i.d. per trial \emph{and per policy}, then this covariance is precisely fixed to be $0$.}. Without accounting for this inflation, collecting results in the second setting could result in test procedures being overly optimistic, violating Type-1 Error control. When $L$ is smaller, it implies more independence in the $N/L$ evaluations per $L$ `types' or `contexts,' reducing the non-i.i.d. bias in practice. For small stratification in particular these effects are often small -- especially when the space of environments is very high-dimensional (i.e., covariances tend to diffuse in those spaces in practice). Therefore, for the purposes of running our evaluation algorithms, we do not attempt to correct for this phenomenon in the data, but we do note its presence. We hope that greater uptake of statistical uncertainty-aware evaluation procedures will also promote evaluation protocols that optimize for these considerations. 

To partially address the preceding point, we note a simple post-hoc correction to obtain truly i.i.d data from existing evaluations collected under the second setting. Recall that in this case, we have an ordered list of initial conditions (possibly with randomization and duplicates), and all the policies have been evaluated in this same order. For each initial condition in this list, we sample a policy at random and record its score, and ignore the evaluation outcomes of all other policies in the same round. Thus, if we have \(N\) trials (and corresponding environments) and \(K\) policies, we will have an effective sample size of \(N/K\), on average, for each policy. One can then apply test procedures such as~\OurMethod~and WSR on the reduced sample size of \(N/K\) evaluations. In essence, to robustly account for potential (worst-case) correlations in the realized environments, we must deflate the sample size by a factor of $K$, where $K$ is the number of policies being evaluated. However, this is generally inefficient as compared to simply resampling an environment configuration for each policy in turn, and should therefore only be undertaken \emph{post-hoc} (i.e., given that the data has already been collected in the aforementioned manner).

\end{document}